\pgfplotsset{compat=newest}
\DeclareMathOperator*{\argmax}{arg\,max}
\DeclareMathOperator*{\argmin}{arg\,min}
\newtheorem{theorem}{Theorem}[section]
\newtheorem{lemma}[theorem]{Lemma}
\title{Out of Distribution Data Detection Using Dropout Bayesian Neural Networks}
\author{
Andre T. Nguyen,\textsuperscript{\rm 1,2,3}
Fred Lu,\textsuperscript{\rm 1,2,3}
Gary Lopez Munoz,\textsuperscript{\rm 1,2}
Edward Raff,\textsuperscript{\rm 1,2,3}
Charles Nicholas,\textsuperscript{\rm 3}
James Holt\textsuperscript{\rm 1}


}
\begin{document}

\maketitle

\begin{abstract}
We explore the utility of information contained within a dropout based Bayesian neural network (BNN) for the task of detecting out of distribution (OOD) data. We first show how previous attempts to leverage the randomized embeddings induced by the intermediate layers of a dropout BNN can fail due to the distance metric used. We introduce an alternative approach to measuring embedding uncertainty, justify its use theoretically, and demonstrate how incorporating embedding uncertainty improves OOD data identification across three tasks: image classification, language classification, and malware detection. 
\end{abstract} 

\section{Introduction}

Detecting out of distribution (OOD) data at test time is critical in a variety of machine learning applications. For example, in the context of malware classification \cite{Raff2020AClassification}, OOD data could correspond to the emergence of a new form of malicious attack. 
\citet{Gal2016DropoutLearning} developed an approach to variational inference in Bayesian neural networks (BNNs) that showed  a neural network with dropout \cite{Hinton2012ImprovingDetectors,Srivastava2014Dropout:Overfitting}, a technique commonly used to reduce overfitting in neural networks (NNs) by randomly dropping units during training, applied before every weight layer is equivalent to an approximation of a deep Gaussian process \cite{Damianou2013DeepProcesses}. Training with dropout effectively performs variational inference for the deep Gaussian process model, and the posterior distribution can be sampled from by leaving dropout on at test time. This approach to Bayesian deep learning has been popular in practice as it is easy to implement and scales well. 

Measures of uncertainty usually are a function of the sampled softmax outputs of such a BNN, for example predictive entropy and mutual information. There is however useful information at every intermediate layer of a dropout BNN. The dropout based approach to Bayesian deep learning suffers, like most variational inference methods, from the tendency to fit an approximation to a local mode instead of to the full posterior because of a lack of representational capacity and because of the directionality of the KL divergence \cite{Smith2018UnderstandingDetection, Wilson2020BayesianGeneralization}. This behavior however allows us to expect the randomized intermediate representation samples in a dropout BNN to be meaningfully related as they are sampled from a local mode. In this paper, we explore how to leverage additional information generated at every layer of the network for the task of OOD data detection at test time. In particular, we interpret the intermediate representation of a data point at a particular layer as a randomized embedding. The embedding is randomized due to the use of dropout at test time.

The idea to use a randomized embedding induced by the intermediate layers of a dropout BNN has been attempted previously, but can fail due to the underlying Euclidean distance metric used in previous work. The use of Euclidean distance does not account for the confounding variability caused by changes in embedding magnitudes. We will theoretically justify and empirically show that by instead using a measure based on cosine distance, this problem can be rectified. We then leverage this improved uncertainty estimation to show better OOD data identification across three highly different tasks to demonstrate the robustness of our approach.

The objective of this paper is not to develop a state-of-the-art approach to OOD data detection, but rather in the context of dropout BNNs to: (1) show how to cheaply improve OOD data detection in systems where a dropout BNN is already deployed, by using intermediate computational results that are already being computed but not fully leveraged, and (2) provide theoretical and practical evidence to highlight why it is valuable to deconflate angular information about embedding dispersion from embedding norm information. Additionally, previous works have evaluated OOD detection by assuming access to a large OOD dataset of similar size to the in distribution dataset. This is an unrealistic assumption as in areas like cyber security where OOD examples are limited and expensive. So, we also examine the effect of small dataset sizes for OOD detection in our experiments.

\section{Related Work}

Two kinds of uncertainty can be distinguished \cite{Kendall2017WhatVision}. Aleatoric uncertainty is caused by inherent noise and stochasticity in the data. More training data will not help to reduce this kind of uncertainty. Epistemic uncertainty on the other hand is caused by a lack of similar training data. In regions lacking training data, different model parameter settings that produce diverse or potentially conflicting predictions can be comparably likely under the posterior. OOD data is expected to have higher uncertainty, epistemic in particular. \citet{Mukhoti2021DeterministicUncertainty} prove that one cannot infer epistemic uncertainty from a deterministic model's softmax entropy, so additional information is needed to estimate epistemic uncertainty.   

Uncertainty modeling using probabilistic embeddings has primarily been used for estimating aleatoric uncertainty \cite{Oh2018ModelingEmbedding, Shi2019ProbabilisticEmbeddings, Chun2021ProbabilisticRetrieval, Chang2020DataRecognition} in tasks such as determining the quality of a test input image. These methods do not easily translate to estimating epistemic uncertainty. For example, \citet{Oh2018ModelingEmbedding} try to apply their method on an epistemic uncertainty estimation task and find that it did not work well for novel classes, and they leave the modeling of epistemic uncertainty as future work. 

The only prior work we are aware of that looks at a randomized embedding approach similar to ours is by \citet{Terhorst2020SER-FIQ:Robustness}, who use dropout at test time to generate a stochastic embedding. They estimate face image quality through the stability of the embedding as measured using Euclidean distance. As we will show, the use of Euclidean distance is problematic as it does not account for factors affecting embedding norms and more generally, the assumptions made by \citet{Terhorst2020SER-FIQ:Robustness} are not met in reality. We also note that they are actually estimating epistemic uncertainty (see \cite{Oh2018ModelingEmbedding} for an explanation) when test image quality is an inherently aleatoric uncertainty estimation problem. We will show both empirical evidence as well as mathematical grounding as to why our proposed approach, without the addition of any complexity, fixes these issues.  

There is evidence that intermediate layers of a neural network contain information useful for epistemic uncertainty estimation and out of distribution detection. \citet{Postels2020TheActivations} establish a connection between the density of hidden representations and the information-theoretic surprise of observing a specific sample in the setting of a deterministic neural network. In particular, they suggest that the first layers of a neural network should be used to estimate epistemic uncertainty due to feature collapse, a phenomena where out-of-distribution data is mapped to in-distribution feature representations in later layers of a network \cite{vanAmersfoort2020UncertaintyNetwork, Mukhoti2021DeterministicUncertainty}, though they also suggest that OOD data detection can benefit from aggregating uncertainty information from several layers. Our work differs from their work as we are not fitting a density to representations of the training data, increasing the applicability of our approach to situations where fitting and storing a density is not an option for computational or regulatory reasons. 

Other recent work has also looked at uncertainty estimation using a single forward pass of a neural network that has had its intermediate representations regularized to produce good uncertainty estimates \cite{vanAmersfoort2020UncertaintyNetwork, Liu2020SimpleAwareness}. We note that many single forward pass based methods like \cite{Mukhoti2021DeterministicUncertainty, Liu2020SimpleAwareness} require residual based networks in combination with spectral normalization to enforce a bi-Lipschitz inductive bias \cite{Bartlett2018RepresentingOptimization}. While the method of \cite{vanAmersfoort2020UncertaintyNetwork} is not residual network constrained, it requires significant changes to the model and training procedure. While our approach requires multiple forward passes (as is the case with all dropout BNNs), it does not require any modifications to existing dropout BNNs, by only using information that is already being computed within a dropout BNN. 

\cite{Mandelbaum2017Distance-basedClassifiers} propose a confidence score that uses a data embedding derived from the penultimate layer of a neural network. The embedding is achieved using either a distance-based loss or adversarial training. Similarly to other methods, this method requires density estimation, and our work differs as our method does not involve a comparison to nearest neighbors from the training set, which may be difficult to deploy in practice due to both storage and regulatory constraints. 

Many works have investigated OOD data detection in probabilistic contexts. \citet{Ovadia2019CanShift} benchmarks Bayesian deep learning methods in the context of dataset shift and OOD data at test time. \citet{Xiao2020WatTransformers} use epistemic uncertainty to detect OOD language data. \citet{Ren2019LikelihoodDetection} detect OOD data using likelihood ratios in the context of deep generative models and evaluate on OOD genomic sequences. Our work makes a contribution to probabilistic OOD identification by being the first work to systematically investigate the appropriate use of the randomized embeddings induced by the intermediate layers of a dropout BNN.

\section{Methods}

In a supervised setting, suppose a neural network structure with $N$ (non-linearity included) layers $f_i, i \in [1,N]$ where $x_1$ is the input and $x_{N+1}$ is the prediction: $x_{i+1} = f_i(x_i)$.
\citet{Gal2016DropoutLearning} showed that a neural network with dropout \cite{Hinton2012ImprovingDetectors,Srivastava2014Dropout:Overfitting} applied before every weight layer is equivalent to an approximation of a deep Gaussian process \cite{Damianou2013DeepProcesses}, and that training with dropout effectively performs variational inference for the deep Gaussian process model. At test time, the posterior distribution can be sampled from by leaving dropout on. This gives us the network structure:
\begin{equation} \label{eq:re}
x_{i+1} = f_i(\textrm{dropout}(x_i))
\end{equation}

\subsection{Randomized Embeddings}

\paragraph{Computing an Embedding} In the context of a trained dropout Bayesian neural network, we can use the intermediate representations from the various layers (the $x_{i+1}$ in \autoref{eq:re}) as a randomized embedding of a data point. The embedding is randomized as multiple forward passes with dropout on will yield different embedding values. The variation in the embedding values could be used to measure epistemic uncertainty \cite{Oh2018ModelingEmbedding}, allowing for the detection of OOD data and dataset shift. 

\paragraph{Measuring Uncertainty} A datum is embedded to a set of randomized embedding values at each layer. We can compute the maximum pairwise distance between the embeddings for a specific datum at a specific layer. This can be done at each layer in the BNN, giving us a feature for each layer that can then be used for tasks such as OOD identification. \textit{All previous work has used Euclidean distance to compute the pairwise distances, without examining the appropriateness of Euclidean distance for the task}. Part of our contribution is an analysis in \autoref{sec:why_cos} of why Euclidean distance is in fact not appropriate, and we introduce a preferable cosine distance based approach which we use in all of our experiments. A small value of 1e-6 was added to the embeddings to avoid numerical issues caused by corner-case zero normed embedding vectors.\footnote{We also note that normalized Euclidean distance, where embedding vectors are normalized to unit length prior to computing Euclidean distance, could also be used in place of cosine distance as its square can be shown to be proportional to cosine distance.} In our experiments, embeddings from non-linear layers (such as convolutions) are flattened prior to computing this metric. A summary of our approach can be found in \autoref{alg:re}. The intuition behind this approach is that if measured appropriately, the ``spread'' or maximal variation in a datum's embedding contains uncertainty information. If all embedding samples are realized to a same point in the embedding space, then there is less uncertainty than if the embedding samples are realized to wildly different parts of the embedding space.

\begin{algorithm}[tb]
\DontPrintSemicolon
\KwIn{A datum $x$, a $N$ layer NN trained with dropout $\{f_1, ..., f_N\}$, and number of samples $T$.}
\KwOut{$N$ randomized embedding based features $z_1,...,z_N$, each corresponding to a layer in the network, for a OOD data detection task.}
\For{t $\gets$ 1 to T}{  
    \For{i $\gets$ 1 to N}{  
        $x_{i+1,t} \gets f_i(\textrm{dropout}(x_{i,t}))$ 
    }
}
\For{i $\gets$ 1 to N}{ 
        $z_{i} \gets \textrm{max}(\textrm{PairwiseCosineDistances}(x_{i,:}))$ 
    } 
\KwRet{$z_1,...,z_N$} \tcp*{Return features.}

\caption{Computing Randomized Embedding Based Features for OOD Data Detection}
\label{alg:re}
\end{algorithm}

\subsection{Baseline Features}

We compare the addition of our randomized embedding based features to a set of common baseline features. For classification tasks, uncertainty estimates in dropout BNNs are usually a function of the sampled softmax outputs. In particular, overall uncertainty can be measured using predictive distribution entropy: $H[\mathbb{P}(y|x,D)] = - \sum_{y \in C}{\mathbb{P}(y|x,D) \log \mathbb{P}(y|x,D)}$.
To isolate and measure epistemic uncertainty mutual information can be used:
$I(\theta,y|D,x) = H[\mathbb{P}(y|x,D)] - \mathbb{E}_{\mathbb{P}(\theta|D)}H[\mathbb{P}(y|x,\theta)]$.

The terms of these equations can be approximated using Monte Carlo estimates obtained by sampling from the dropout BNN posterior \cite{Smith2018UnderstandingDetection}. In particular, $\mathbb{P}(y|x,D) \approx \frac{1}{T} \sum_{i=1}^T \mathbb{P}(y|x,\theta_i)$ and $\mathbb{E}_{\mathbb{P}(\theta|D)}H[\mathbb{P}(y|x,\theta)] \approx \frac{1}{T} \sum_{i=1}^T H[\mathbb{P}(y|x,\theta_i)]$ where the $\theta_i$ are samples from the posterior over models and $T$ is the number of samples. In addition to predictive distribution entropy and mutual information, we also use maximum softmax probability (the value of the largest element of $\mathbb{P}(y|x,D)$) as a feature, shown by \citet{Hendrycks2016ANetworks} to be an effective baseline for the OOD data detection task.

\subsection{How to Measure Embedding Dispersion}

We will now explore why Euclidean distance as used by previous works is not appropriate to measure randomized embedding dispersion. We illustrate using a LeNet5 \cite{YannLeCun1998Gradient-BasedRecognition} model with added dropout before each layer trained on MNIST, with MNIST variants as OOD data. Further data, model, and experimental details correspond to those expanded upon in \autoref{sec:mnist}.

\subsubsection{The Problem With Euclidean Distance} \label{sec:prob_euc_dis}

\citet{Terhorst2020SER-FIQ:Robustness} suggest the Euclidean distance to measure when a data point is suitable for a downstream task, where lower variability in the stochastic embedding induced by a dropout neural network suggests higher suitability for a data point. In particular, they use the sigmoid of the negative mean Euclidean distance between all stochastic embedding pairs for a data point as the measure of suitability. In other words, their hypothesis is that a form of uncertainty can be measured using the Euclidean distance between embedding samples.

We find that if Euclidean distance is used as the metric to measure distance between samples, their hypothesis holds only with excessive training and likely over-fitting.
\autoref{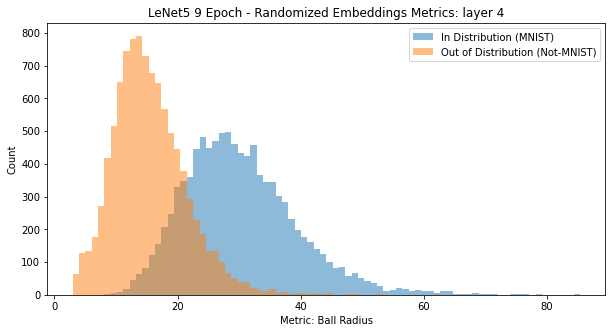} shows that with enough training to get to the accuracy plateau (10 epochs of training with a batch size of 64, with a test accuracy of 0.9885), we actually see the opposite effect. Embeddings for OOD data are actually less spread out than embeddings for in distribution data. 
\autoref{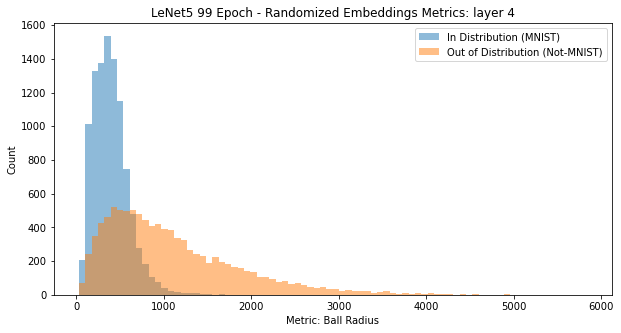} shows that with excessive training (100 epochs of training, with a lower test accuracy of 0.9882), we see that the hypothesis holds better but note that there is still a good amount of overlap between the histograms, limiting the usefulness for OOD detection (and adding a difficult to select stopping criteria). We note that what we are observing is \textit{not} feature collapse. 

This points to two issues that we need to resolve. First, how can we get consistent behavior regardless of over/under-training? Second, how can we more usefully measure spread in a way that matches intuition?

\subsubsection{Spectral Normalization Stabilizes Behavior}

Spectral normalization rescales the weights during training with the spectral norm of the weight matrix, enforcing a Lipschitz constraint that bounds the derivative of the learned function \cite{Miyato2018SpectralNetworks}. This helps to preserve distance as a data point makes its way through the network. \autoref{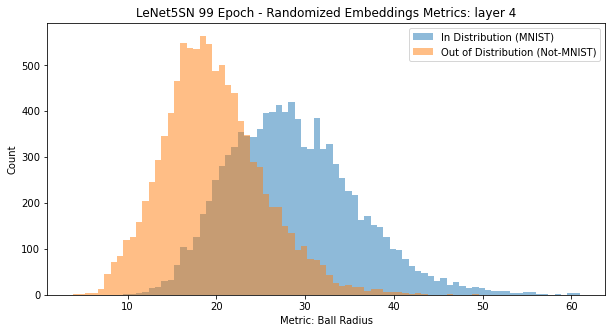} shows that a spectral normalized version of the network results in consistent behavior even with longer training (100 epochs of training, with a test accuracy of 0.9927). So, there is a solution to the first problem. However, we still see that the spread for OOD data is lower than for in distribution data. 

\subsubsection{Why Cosine Distance Is Needed To Properly Measure Embedding Dispersion} \label{sec:why_cos}

Previous research around OOD detection has noted that a lower maximal softmax output value is correlated with a data point being OOD \cite{Hendrycks2016ANetworks}. One possible explanation could be logits (softmax inputs) of smaller norm. This would make intuitive sense as potentially, less neurons would activate for OOD data since OOD data would lack the in distribution features the network is looking for. 

The squared Euclidean distance between vectors $\bf u$ and $\bf v$ can be written as, where $\theta$ is the angle between $\bf u$ and $\bf v$:
\begin{equation} \label{eq:euc}
|| {\bf u} - {\bf v} ||^2 = || {\bf u} ||^2 + || {\bf v} ||^2 - 2 \: || {\bf u} || \: || {\bf v} || \: \cos{\theta}
\end{equation}
If embedding norms are inherently smaller for OOD data, then Euclidean distance which is norm dependent cannot be used to compare embedding spread across OOD and in distribution datasets, due to confounding. As shown in \autoref{eq:euc}, angular information is affected by norm in both an additive and multiplicative manner with Euclidean distance. So, assuming confounding caused by systematic norm differences, cosine distance should be used to isolate the angular information when measuring embedding dispersion. If Euclidean distance mostly captures information already captured by the norm, then the benefit of being Bayesian for this task is not fully leveraged as norm can be estimated with a single point estimate. \textit{To take full advantage of a dropout BNN, angular information about embedding dispersion needs to be deconflated from embedding norm information.}

We explored this hypothesis and found it to be empirically true and formally justifiable. In \autoref{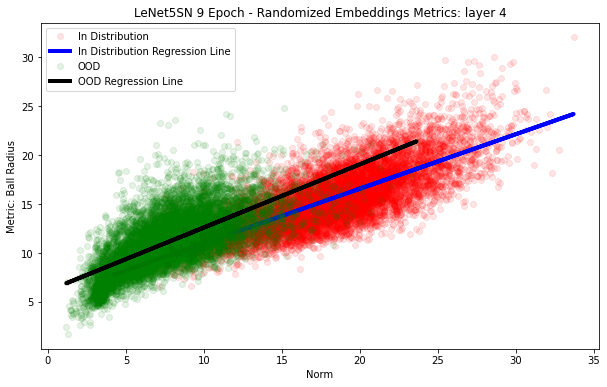}, Euclidean distance is used to measure embedding dispersion, we see that dispersion is correlated with the logits norm and that the relationship is nearly identical for OOD and in distribution data. This means that measuring the spread of the embeddings using Euclidean distance conveys little extra information than just looking at the norm of the logits. In Appendix \autoref{sec:appendix_sim_mean_variance}, we perform a simulation to further illustrate this problem in the case of a two layer ReLU activated network.

We want to measure spread in a way that is independent of the embedding norm. This can be done a couple of different ways. For example, a simple switch to cosine distance could be used, or the embeddings could be normalized prior to using Euclidean distance (which can be shown to be related to cosine distance). As illustrated in \autoref{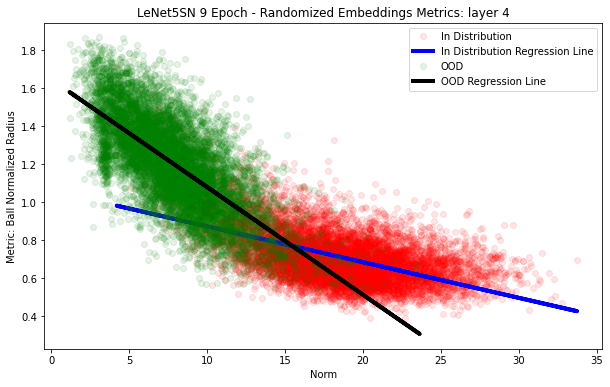}, using cosine distance results in OOD and in distribution data having behaviors that are no longer identical. Appendix \autoref{sec:appendix_unsup} shows similar results in an unsupervised setting involving a stacked denoising autoencoder variant. 

\autoref{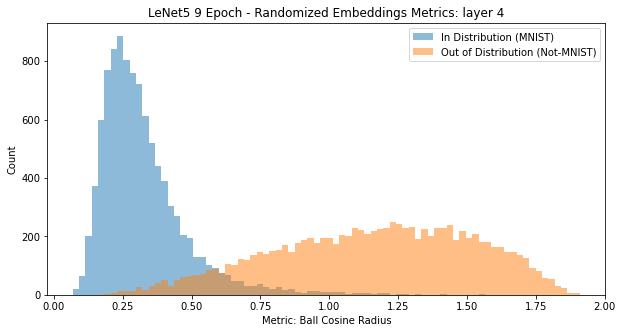} shows the same information as \autoref{figs-re/lenet-9-4-reg.png}, except a cosine distance based measure of spread is used instead of a Euclidean based one. With cosine distance, we now see the expected behavior of OOD having more spread than in distribution, and we see a better separation as well which is good for OOD detection. We have shown results for the last layer of a network but note that a similar analysis can be done for each layer. 
Having shown empirical evidence for why angular information needs to be isolated from norm information when measuring embedding dispersion, we next provide a formal analysis for why cosine distance allows for an additional source of information.  

\begin{figure}[tb]
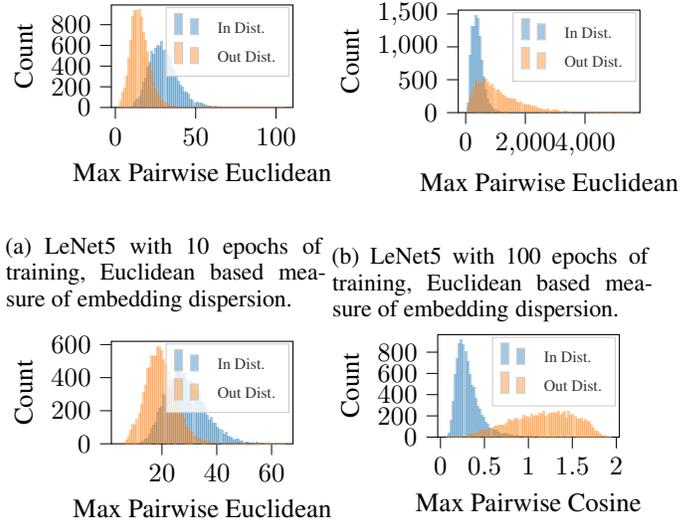

    \centering
    
    \begin{subfigure}[]{0.5\columnwidth}
        \centering
          \input{figs-re/reg-9-4-euc.tex}
        \caption{LeNet5 with 10 epochs of training, Euclidean based measure of embedding dispersion.} \label{figs-re/lenet-9-4-reg.png}
    \end{subfigure}%
    ~
    \begin{subfigure}[]{0.5\columnwidth}
        \centering
        \input{figs-re/reg-99-4-euc.tex}
        \caption{LeNet5 with 100 epochs of training, Euclidean based measure of embedding dispersion.} \label{figs-re/lenet-99-4-reg.png}
    \end{subfigure}%
    \\
    \begin{subfigure}[]{0.5\columnwidth}
        \centering
        \input{figs-re/sn-99-4-euc.tex}
        \caption{Spectral Normalized LeNet5 with 100 epochs of training, Euclidean measure of 
        dispersion.} \label{figs-re/lenet-99-4-sn.png}
    \end{subfigure}%
    ~
    \begin{subfigure}[]{0.5\columnwidth}
        \centering
        \input{figs-re/reg-9-4-cos.tex}
        \caption{LeNet5 with 10 epochs of training, cosine based measure of embedding dispersion.} \label{figs-re/lenet-9-4-reg-cos.png}
    \end{subfigure}%

    \caption{Comparison of last layer randomized embedding dispersion distributions for in distribution data (MNIST) and OOD data (Not-MNIST).}
\end{figure}    
    
\begin{figure}[tb]
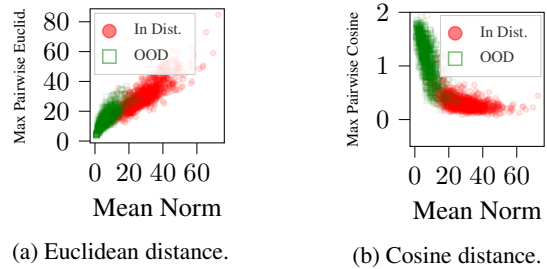

    \centering
    \begin{subfigure}[]{0.5\columnwidth}
        \centering
        \input{figs-re/line-reg-9-4-euc.tex}
        \caption{Euclidean distance.} \label{figs-re/line-euc.png}
    \end{subfigure}%
    ~
    \begin{subfigure}[]{0.5\columnwidth}
        \centering
        \input{figs-re/line-reg-9-4-cos.tex}
        \caption{Cosine distance.} \label{figs-re/line-nomedeuc.png}
    \end{subfigure}%
    
    \caption{A comparison of the relationships between last layer randomized embedding mean norm and the maximum pairwise distance for Euclidean and cosine distances respectively, for in distribution data (MNIST) and OOD data (Not-MNIST). Both models using LeNet5 trained for 10 epochs. 
    }
    
\end{figure}

\subsubsection{Formal Analysis of Cosine Embedding Dispersion} \label{sec:formal_math}

We aim to compute a metric that is invariant to the relative magnitudes among embedding samples, and also accurately represents the dispersion of the embedding samples. In the following, we argue that the mutual information score is not satisfactory for these two objectives. Our goal is not to replace the mutual information as an uncertainty measure, but rather to demonstrate that our pairwise cosine similarity yields an additional source of information that is not captured otherwise.

Let $\{z_i\}_{i=1}^m$ denote $m$ embedding vectors sampled through dropout. The mutual information score is defined as
$$ I(w, y | D, x) = H[p(y|x, D)] - \mathbb{E}_{p(w|D)} H[p(y|x, w)]$$
and is approximated by 
$$ \hat I(w, y|D, x) = H\left[ \frac{1}{m} \sum_{i=1}^m \mathrm{softmax}(z_i) \right] - \frac{1}{m} \sum_{i=1}^m H\left[\mathrm{softmax}(z_i)\right]$$
where $H(\cdot)$ is the entropy function $H(y) = - \sum_i y_i \log y_i$. 

We first introduce a theorem from \citet{Amos2019DifferentiableLearning} that clarifies the geometric properties of the softmax function. The proof is readily shown using Lagrange multipliers.

\begin{theorem}
The softmax function $\mathrm{softmax}(x)_j = \frac{\exp(x_j)}{\sum_i \exp(x_i)}$ is a map from $\mathbb{R}^d$ to the $(d-1)$-simplex that satisfies
$$ \mathrm{softmax}(x) = \argmin_{0 < y < 1} -x^\top y - H(y)\ \ \mathrm{s.t.}\ \ 1^\top y = 1$$
\end{theorem}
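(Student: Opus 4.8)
The plan is to read the right-hand side as a \emph{strictly convex} minimization over the probability simplex, to solve the first-order stationarity conditions with a single Lagrange multiplier attached to the equality constraint $1^\top y = 1$, and then to check that the resulting candidate is feasible with strictly positive coordinates, which by strict convexity makes it the unique global minimizer.

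First I would write the objective as $g(y) = -x^\top y - H(y) = -x^\top y + \sum_{i=1}^d y_i \log y_i$, regarded on the hyperplane $\{y : 1^\top y = 1\}$ intersected with $0 < y < 1$. Extending $g$ to the closed simplex by the convention $0\log 0 = 0$, it is continuous on a compact set and strictly convex (the linear term $-x^\top y$ does not affect convexity, and $t \mapsto t\log t$ is strictly convex), hence it attains a unique minimizer there. A short argument then shows this minimizer lies in the relative interior: at any face where some coordinate $y_i$ vanishes, the directional derivative of $g$ in the direction pointing into the simplex is $-\infty$ because of the $\log y_i$ term, so no boundary point can be optimal. Consequently the inequality constraints $0 < y < 1$ are inactive, and only the equality constraint is binding.

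Next I would form the Lagrangian $\mathcal{L}(y,\lambda) = -x^\top y + \sum_i y_i\log y_i + \lambda\bigl(1^\top y - 1\bigr)$ and impose interior stationarity: $\partial \mathcal{L}/\partial y_j = -x_j + \log y_j + 1 + \lambda = 0$, which gives $y_j = \exp(x_j - 1 - \lambda)$. Substituting into $\sum_j y_j = 1$ forces $e^{-1-\lambda} = \bigl(\sum_i \exp(x_i)\bigr)^{-1}$, and therefore $y_j = \exp(x_j)/\sum_i \exp(x_i) = \mathrm{softmax}(x)_j$. Since every such $y_j$ lies strictly between $0$ and $1$ and the coordinates sum to one, the candidate is feasible and lands in the $(d-1)$-simplex; combined with strict convexity of $g$ this certifies it as the unique global minimizer, which is exactly the claimed identity.

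The only real subtlety — the step I would treat most carefully — is the justification that the optimum is interior, so that the non-negativity constraints may be dropped and ordinary equality-constrained Lagrange multipliers suffice (equivalently, that the KKT stationary conditions reduce to $\nabla g(y) + \lambda\mathbf{1} = 0$). This rests entirely on the blow-up of the derivative of $y_i\log y_i$ as $y_i \to 0^+$; once that is in hand, the remainder is the routine Lagrange-multiplier computation sketched above.
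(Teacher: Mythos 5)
Your proof is correct and follows exactly the route the paper indicates: it cites the result from Amos (2019) and remarks only that "the proof is readily shown using Lagrange multipliers," which is precisely the stationarity computation you carry out. Your added care about interiority of the minimizer (via the blow-up of the derivative of $y_i\log y_i$ at the boundary) and the uniqueness from strict convexity fills in the details the paper omits, and both the Lagrangian stationarity condition and the normalization of $y_j=\exp(x_j-1-\lambda)$ are carried out correctly.
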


From this we see that the softmax solution is a balance between two competing objectives: maximizing $x^\top y$ which aims to place all weight on the coordinate with the largest $x_i$ value, and maximizing the entropy of $y$ which steers toward the uniform vector with value $1/d$. In addition, the softmax temperature changes the relative weighting, which allows us to evaluate the effect of the magnitude of the embedding vector. We leverage this for a further Lemma and Theorem:

\begin{lemma}
The softmax function with temperature $\alpha$, defined by $\mathrm{softmax}(x / \alpha)$, satisfies 
$$ \mathrm{softmax}(x / \alpha) = \argmax_{0 < y < 1} x^\top y + \alpha H(y)\ \ \mathrm{s.t.}\ \ 1^\top y = 1$$
\end{lemma}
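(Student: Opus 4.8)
The plan is to reduce the Lemma directly to the theorem of \citet{Amos2019DifferentiableLearning} via a change of variables, together with the elementary observation that scaling an objective by a positive constant leaves its set of maximizers unchanged. First I would rewrite that theorem in its equivalent maximization form: minimizing $-x^\top y - H(y)$ over the open probability simplex is the same as maximizing $x^\top y + H(y)$, so $\mathrm{softmax}(x) = \argmax_{0 < y < 1,\ 1^\top y = 1} x^\top y + H(y)$.

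Next I would apply this identity with the input vector $x$ replaced by $x/\alpha$, which is legitimate since the theorem holds at every point of $\mathbb{R}^d$. This gives $\mathrm{softmax}(x/\alpha) = \argmax_{0 < y < 1,\ 1^\top y = 1} (x/\alpha)^\top y + H(y)$. The feasible set is the open probability simplex and does not depend on $\alpha$, so the only remaining work is to rewrite the objective. Since $\alpha > 0$, multiplying $(x/\alpha)^\top y + H(y)$ by the positive constant $\alpha$ does not change the maximizer, and the scaled objective is exactly $x^\top y + \alpha H(y)$. Combining the two displays yields $\mathrm{softmax}(x/\alpha) = \argmax_{0 < y < 1,\ 1^\top y = 1} x^\top y + \alpha H(y)$, which is the claim.

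I do not expect a genuine obstacle here: the mathematical content is entirely carried by the cited theorem, and the temperature merely amounts to a positive rescaling. The one point worth a sentence of justification is that the $\argmax$ is well defined and unique throughout — the entropy $H$ is strictly concave on the simplex and $\alpha > 0$, so $x^\top y + \alpha H(y)$ is strictly concave with a compact (closed-simplex) closure of its feasible region, giving a unique maximizer and hence no tie-breaking ambiguity when we pass between the $(x/\alpha)$ and $\alpha$-scaled forms. With that remark in place the argument is a two-line consequence of the theorem.
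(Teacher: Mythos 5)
Your proof is correct and follows essentially the same route as the paper: substitute $x/\alpha$ into the cited theorem, multiply the objective by the positive scalar $\alpha$, and flip the minimization of the negative into a maximization. Your extra remark on uniqueness of the maximizer via strict concavity of $H$ is a harmless addition the paper omits.
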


\begin{proof}
From the previous theorem we get $\mathrm{softmax}(x/\alpha) = \argmin_{0 < y < 1} -(x / \alpha)^\top y - H(y)\ \ \mathrm{s.t.}\ \ 1^\top y = 1$. Multiplying by scalar $\alpha$ and switching the optimization to maximizing the negative does not change the optimal solution, yielding the statement above. 
\end{proof}

These facts help indicate that softmax-based metrics are not suited for assessing the angular dispersion among vectors. We note that the mapped vector is $\alpha$-dependent and hence dependent on the $L_2$ magnitude of the input vector. Furthermore, arbitrary translations of the vector, which can completely change the direction of the vector, do not impact the softmax. These observations are formalized below.

\begin{theorem}
The softmax function is invariant to translation of input vector $x$. It is not invariant to scaling $x$ except in the special case when $x_1=x_2=\ldots=x_d$. Furthermore, as the magnitude of $x$ increases (without changing direction), the softmax shifts weight to the vertex of the simplex corresponding to the largest coordinate in $x$.
\end{theorem}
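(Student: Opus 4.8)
The statement bundles three claims, and the plan is to dispatch each with an essentially one-line computation, using the variational form from the temperature lemma above only as a cross-check. Throughout I will write $\mathbf{1}=(1,\dots,1)^\top$ and let $e_k$ denote the standard basis vector, i.e.\ the vertex of the simplex placing all mass on coordinate $k$; and I will formalize "increasing the magnitude of $x$ without changing direction" as replacing $x$ by $\beta x$ and letting $\beta\to\infty$, which by the Lemma is the same as sending the temperature $\alpha=1/\beta\to 0$.

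For \emph{translation invariance} I would cancel the common factor directly: $\mathrm{softmax}(x+c\mathbf{1})_j=\frac{\exp(x_j+c)}{\sum_i\exp(x_i+c)}=\frac{e^{c}\exp(x_j)}{e^{c}\sum_i\exp(x_i)}=\mathrm{softmax}(x)_j$; the characterization of \citet{Amos2019DifferentiableLearning} gives the same conclusion, since on $\{y:1^\top y=1\}$ the objective changes only by the additive constant $c$, which does not move the minimizer. For \emph{failure of scale invariance} I would compare coordinate ratios: for $\beta>0$ and any indices $j,\ell$,
\[
\frac{\mathrm{softmax}(\beta x)_j}{\mathrm{softmax}(\beta x)_\ell}=\exp\!\big(\beta(x_j-x_\ell)\big),
\]
which, whenever $x$ has two unequal coordinates $x_j\neq x_\ell$, is a strictly monotone and hence non-constant function of $\beta$; so its value at $\beta\neq 1$ differs from its value at $\beta=1$, and therefore $\mathrm{softmax}(\beta x)\neq\mathrm{softmax}(x)$. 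Conversely, if $x_1=\dots=x_d$ then $x=c\mathbf{1}$ and translation invariance forces $\mathrm{softmax}(\beta x)=\mathrm{softmax}(\mathbf{0})=\tfrac1d\mathbf{1}$ for every $\beta$; equivalently, the linear term $x^\top y$ is constant on the simplex, so the temperature in the Lemma is immaterial. This is exactly the claimed exception.

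For the \emph{concentration} statement, assume first that the largest coordinate is unique, say $x_k>x_i$ for all $i\neq k$. Factoring the largest exponential out of numerator and denominator,
\[
\mathrm{softmax}(\beta x)_j=\frac{\exp\!\big(\beta(x_j-x_k)\big)}{\sum_i\exp\!\big(\beta(x_i-x_k)\big)},
\]
every exponent with $j\neq k$ is strictly negative, so those numerators tend to $0$ as $\beta\to\infty$, while the $j=k$ numerator is identically $1$ and the denominator tends to $1$; hence $\mathrm{softmax}(\beta x)\to e_k$. As a sanity check, this is also what the Lemma predicts: $\mathrm{softmax}(x/\alpha)=\argmax_{0<y<1} x^\top y+\alpha H(y)$ subject to $1^\top y=1$, and as $\alpha\to 0$ the entropy regularizer switches off, leaving the linear program $\argmax_{y} x^\top y$ over the simplex, whose optimum for a unique-max $x$ is the vertex $e_k$.

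The one place that needs genuine care is the tie case: if the maximum of $x$ is attained on a coordinate set $S$ with $|S|>1$, the same factorization gives $\mathrm{softmax}(\beta x)\to\tfrac1{|S|}\sum_{k\in S}e_k$, i.e.\ the uniform distribution over the argmax face rather than a single vertex, so I would state the theorem for the generic unique-maximum case and record the tie behavior as a remark. The only other effort is phrasing "shifts weight to the vertex" precisely, which I would do by stating $\mathrm{softmax}(\beta x)\to e_k$ together with the monotone increase of $\beta\mapsto\mathrm{softmax}(\beta x)_k$ toward $1$ for $\beta>0$ (immediate since each $\exp(\beta(x_i-x_k))$ is decreasing in $\beta$); beyond this bookkeeping I expect no substantial obstacle.
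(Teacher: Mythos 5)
Your proposal is correct, and for two of the three claims it takes a genuinely different route from the paper. The translation-invariance step is identical (cancel the common factor $e^{c}$). For the failure of scale invariance and the concentration claim, however, the paper argues entirely through its Lemma 1.2: it observes that scaling $x$ by $1/\alpha$ reweights the entropy term $\alpha H(y)$ in the variational characterization, asserts that the linear objective and the entropy objective have non-coinciding optima (hence the solution moves with $\alpha$), and then sends $\alpha\to 0$ to kill the entropy contribution and push the solution toward the argmax vertex. You instead work directly with the closed form: the coordinate-ratio identity $\mathrm{softmax}(\beta x)_j/\mathrm{softmax}(\beta x)_\ell=\exp\bigl(\beta(x_j-x_\ell)\bigr)$ gives non-invariance immediately and, frankly, more rigorously than the paper's appeal to "their solutions do not coincide," which is left at the level of intuition; and factoring out the largest exponential gives the limit $\mathrm{softmax}(\beta x)\to e_k$ by inspection. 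Your version is more elementary and self-contained (you use the Lemma only as a sanity check), and it adds two refinements the paper omits: the explicit tie case, where the limit is uniform over the argmax face rather than a vertex, and the strict monotonicity of $\beta\mapsto\mathrm{softmax}(\beta x)_k$, which makes "shifts weight" precise rather than merely asymptotic. What the paper's route buys in exchange is the geometric picture of softmax as a tug-of-war between $x^\top y$ and $H(y)$, which is the intuition the surrounding discussion actually relies on.
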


\begin{proof}
Invariance to translation follows from observing that $\mathrm{softmax}(x + K) = \exp(x_j + K) / \sum_i \exp(x_j + K) = \exp(x_j) / \sum_i \exp(x_j) = \mathrm{softmax}(x)$.

The dependence on scaling follows from Lemma 1.2. Consider two vectors $x, x'$ such that $x' = x / \alpha$. The value of $\alpha$ adjusts the scale of the $H(y)$ term. Since the $\max x^\top y$ objective aims to shift weight in $y$ to the largest $x$ coordinate and the $\max H(y)$ objective aims to distribute weight evenly, their solutions do not coincide, giving $\mathrm{softmax}(x)$ and $\mathrm{softmax}(x')$ different solutions. In the special case that $x_1 = x_2 = \ldots = x_d$ then $x^\top y$ is constant, so the optimization of $H(y)$ gives the uniform distribution vector. Otherwise, increasing the magnitude of $x'$ is equivalent to sending $\alpha \to 0$, which decreases the contribution of $H(y)$. This causes the solution vector to shift weight to the element with largest value in $x$.
\end{proof}

We confirm this analysis by simulation in Appendix \autoref{sec:appendix_sim_corr}, where we find that our new cosine-based feature adds an orthogonal measure of information that is not captured in previously used measures of uncertainty. 

\section{Experiments and Results} \label{sec:exp_res}

In this section, we evaluate the value of randomized embedding based features across three different OOD data detection tasks in the vision, language, and malware domains. All experiments were implemented in PyTorch \cite{Paszke2019PyTorch:Library}, and neural networks were optimized using Adam with the default recommended settings \cite{Kingma2014Adam:Optimization}. A dropout probability of $p=0.1$ was used, and when sampling from the base neural network models to compute features for OOD detection, 32 samples are used. Experiments were run on an 80 CPU core machine with 512GB of RAM using a single 16GB Tesla P100 GPU. Experiment specific details are described in their respective sections.

We explore the use of two model classes for the OOD detection algorithms. The first model is an L2-regularized logistic regression (LR) with the regularization strength chosen using 3-fold cross-validation. We min-max scaled the input features for the LR model to the range $[0,1]$ based on the training data. The second model is a 500 tree random forest (RF) classifier.  We choose these two models to assess linear vs. non-linear behavior in the OOD detection task. We also explore the effect of varied, small training set sizes for the OOD task in all of our experiments. In many production contexts such as cyber security, examples of OOD data are limited and usually expensive to obtain.

\subsection{Image Classification}
\label{sec:mnist}

For our vision experiments, similarly to the evaluation protocol from \cite{vanAmersfoort2020UncertaintyNetwork, Ren2019LikelihoodDetection, Postels2020TheActivations, Mukhoti2021DeterministicUncertainty} we explore MNIST variants as OOD data. In particular, we train our base model, a LeNet5 \cite{YannLeCun1998Gradient-BasedRecognition} with added dropout before each layer, on MNIST and use Kuzushiji-MNIST \cite{Clanuwat2018DeepLiterature}, notMNIST \cite{Bulatov2011NotMNISTDataset}, and Fashion-MNIST \cite{Xiao2017Fashion-MNIST:Algorithms} as OOD data. When training the downstream OOD data detection algorithms, we train the OOD detector on one of the OOD datasets and test on the other two. For example, we first train a digit classifier on MNIST. Then, we train an OOD data detector that uses randomized embedding based features from the digit classifier to classify MNIST vs. notMNIST. Then we test the OOD data detector on MNIST vs. Kuzushiji-MNIST and Fashion-MNIST.

Due to its importance in practical use, we will test the sample efficiency of the OOD tasks (i.e., how few samples of OOD are needed to detect future OOD data). 
In particular, we evaluate performance, as measured by area under the receiver operating characteristic curve (captures desired data ordering performance) and accuracy (captures desired decision making value), using training datasets consisting of $n$=1000, 100, and just 10 data points from each class (in distribution and OOD). We note that this differs from most previous works which have evaluated by assuming access to a large OOD dataset of similar size to the in distribution dataset, an often unrealistic assumption. Each experiment was run 100 times with random training set samples, where all appropriate data not in the training set is included in the test set,  and we report a mean and standard deviation for each. In all of our experiments, the standard deviations are much smaller than effect sizes, so we report only the means in this section, and standard deviations can be found in appendix \autoref{sec:appendix_std}.

\subsubsection{Detecting OOD Data}
\label{sec:mnist_ood_experiment}

\autoref{tab:mnist-ood-reg99} compares performance with and without the cosine embedding spread features for various experimental configurations and OOD detection models for a dropout LeNet5 trained for 100 epochs. Features labeled as ``Last'' consist of common baseline features computed using softmax output samples from the network (predictive entropy, mutual information, and maximum softmax probability). Features labeled as ``Last+Spread'' consist of these baseline features plus our additional randomized embedding maximum cosine spread features for each layer. 

The inclusion of the additional cosine spread features improves OOD detection performance consistently across datasets, training set sizes, and model types. In limited cases where the ``Spread'' features do not improve the LR model, the RF model with ``Spread'' features performs the best overall, suggesting that the relationship is not necessarily linear. \autoref{tab:mnist-ood-sn99_std} in the Appendix summarizes results from a similar experiment where the base model is a spectral normalized dropout LeNet5 trained for 100 epochs. A comparison of \autoref{tab:mnist-ood-reg99} and Appendix \autoref{tab:mnist-ood-sn99_std} suggests that, while spectral normalization is not required to see an improvement from the inclusion of cosine spread features, spectral normalization does improve OOD detection performance consistently. 

In Appendix \autoref{sec:appendix_more_mnist_experiments}, we further examine the need for a small amount of OOD training data, evaluate Euclidean based spread features, and investigate the feature importances associated with our cosine spread features. 

\begin{table}[tb]
\caption{Performance with and without the cosine randomized embedding spread features for various experimental configurations for a dropout LeNet5 trained on MNIST. Features labeled as ``Last'' consist of common baseline features computed using softmax output samples from the network (predictive entropy, mutual information, and maximum softmax probability). Features labeled as ``Last+Spread'' consist of these baseline features plus our additional randomized embedding maximum cosine spread features for each layer. Each experiment was repeated multiple times, and the mean is reported here while the standard deviation is reported in \autoref{sec:appendix}. Best results are shown in \textbf{bold}.} 
\label{tab:mnist-ood-reg99}
\centering
\adjustbox{max width=0.99\columnwidth}{
\begin{tabular}{@{}llll|rrrrrr@{}}
\toprule
  \multicolumn{3}{c}{OOD} &  \multicolumn{1}{l|}{Num/Class} & \multicolumn{2}{c}{n=1000}    & \multicolumn{2}{c}{n=100}    & \multicolumn{2}{c}{n=10}    \\ \cmidrule(lr){1-3} \cmidrule(lr){5-6} \cmidrule(lr){7-8} \cmidrule(lr){9-10}
  &  &  & \multicolumn{1}{l|}{Metric} & \multicolumn{1}{l}{AUC} & \multicolumn{1}{l}{Acc} & \multicolumn{1}{l}{AUC} & \multicolumn{1}{l}{Acc} & \multicolumn{1}{l}{AUC} & \multicolumn{1}{l}{Acc} \\
Train & Test & Model & \multicolumn{1}{l|}{Features} &  &  &  &  &  &  \\ \midrule
Fashion & Kuzushiji & LR & Last & 0.969 & \textbf{0.914} & 0.967 & 0.909 & 0.963 & 0.884 \\
 &  &  & Last+Spread & \textbf{0.979} & 0.914 & \textbf{0.973} & \textbf{0.911} & \textbf{0.967} & \textbf{0.901} \\ \cmidrule(l){3-10} 
 &  & RF & Last & 0.960 & 0.917 & 0.952 & 0.905 & 0.942 & 0.884 \\
 &  &  & Last+Spread & \textbf{0.979} & \textbf{0.922} & \textbf{0.974} & \textbf{0.921} & \textbf{0.969} & \textbf{0.907} \\ \cmidrule(l){2-10} 
 & notMNIST & LR & Last & 0.966 & 0.912 & 0.965 & 0.909 & 0.960 & 0.879 \\
 &  &  & Last+Spread & \textbf{0.983} & \textbf{0.932} & \textbf{0.979} & \textbf{0.925} & \textbf{0.967} & \textbf{0.892} \\ \cmidrule(l){3-10} 
 &  & RF & Last & 0.959 & 0.920 & 0.950 & 0.903 & 0.938 & 0.880 \\
 &  &  & Last+Spread & \textbf{0.985} & \textbf{0.940} & \textbf{0.976} & \textbf{0.924} & \textbf{0.963} & \textbf{0.901} \\ \midrule
Kuzushiji & Fashion & LR & Last & 0.973 & 0.920 & 0.972 & 0.917 & 0.967 & 0.899 \\
 &  &  & Last+Spread & \textbf{0.989} & \textbf{0.948} & \textbf{0.983} & \textbf{0.937} & \textbf{0.978} & \textbf{0.922} \\ \cmidrule(l){3-10} 
 &  & RF & Last & 0.964 & 0.920 & 0.956 & 0.907 & 0.946 & 0.896 \\
 &  &  & Last+Spread & \textbf{0.986} & \textbf{0.943} & \textbf{0.978} & \textbf{0.931} & \textbf{0.967} & \textbf{0.914} \\ \cmidrule(l){2-10} 
 & notMNIST & LR & Last & 0.967 & 0.914 & 0.965 & 0.910 & 0.960 & 0.886 \\
 &  &  & Last+Spread & \textbf{0.984} & \textbf{0.931} & \textbf{0.975} & \textbf{0.914} & \textbf{0.966} & \textbf{0.888} \\ \cmidrule(l){3-10} 
 &  & RF & Last & 0.960 & 0.922 & 0.950 & 0.904 & 0.938 & 0.888 \\
 &  &  & Last+Spread & \textbf{0.982} & \textbf{0.935} & \textbf{0.971} & \textbf{0.921} & \textbf{0.954} & \textbf{0.896} \\ \midrule
notMNIST & Fashion & LR & Last & 0.966 & 0.911 & 0.957 & 0.906 & 0.959 & 0.893 \\
 &  &  & Last+Spread & \textbf{0.978} & \textbf{0.937} & \textbf{0.969} & \textbf{0.928} & \textbf{0.977} & \textbf{0.925} \\ \cmidrule(l){3-10} 
 &  & RF & Last & 0.960 & 0.910 & 0.955 & 0.904 & 0.946 & 0.887 \\
 &  &  & Last+Spread & \textbf{0.988} & \textbf{0.943} & \textbf{0.982} & \textbf{0.935} & \textbf{0.978} & \textbf{0.920} \\ \cmidrule(l){2-10} 
 & Kuzushiji & LR & Last & 0.960 & \textbf{0.900} & 0.946 & \textbf{0.893} & 0.951 & 0.882 \\
 &  &  & Last+Spread & \textbf{0.966} & 0.893 & \textbf{0.949} & 0.886 & \textbf{0.967} & \textbf{0.902} \\ \cmidrule(l){3-10} 
 &  & RF & Last & 0.956 & \textbf{0.906} & 0.950 & 0.901 & 0.941 & 0.883 \\
 &  &  & Last+Spread & \textbf{0.978} & 0.906 & \textbf{0.973} & \textbf{0.915} & \textbf{0.969} & \textbf{0.906} \\ \bottomrule
\end{tabular}
}
\end{table}

\subsection{Language Classification}

Out of distribution data detection is also of interest in natural language processing, where systems are trained to work on specific languages, and inputs from other languages are considered OOD \cite{Xiao2020WatTransformers}. For these experiments, we train a Char-CNN \cite{Zhang2016Character-levelClassification} with dropout added before every layer to classify languages using the WiLI dataset \cite{Thoma2018TheIdentification}. Training consisted of 50 epochs with a batch size of 128, where the 100 most common characters in the training set (after stripping accents) were used as the vocabulary and each datum was truncated/padded to a length of 200 characters. We train the language classification model to distinguish between French, Spanish, German, English, Italian, and Portuguese text. We use Basque, Polish, Luganda, Finnish, Tongan, and Xhosa as out of distribution languages. All of our in and out of distribution languages are chosen to use the Latin writing system. For the OOD task, training sets consisted of $n$=100, 50, 25, and 10 data points from each class (in distribution and OOD). Each experiment was run 100 times with random training data subsamples, where all languages not trained on are tested on. \autoref{tab:lang_results} shows that the inclusion of our randomized embedding based features consistently improves OOD detection across experimental settings, with average and maximal AUC improvements of 0.06 and 0.15.

\begin{table}[tb]
\caption{Performance with and without the cosine randomized embedding spread features for a Char-CNN with dropout added before every layer trained to classify languages using the WiLI dataset. Standard deviations are reported in \autoref{sec:appendix}, and best results are shown in \textbf{bold}.}
\label{tab:lang_results}
\centering
\adjustbox{max width=0.99\columnwidth}{
\begin{tabular}{@{}llll|rrrrrrrr@{}}
\toprule

  \multicolumn{3}{c}{OOD} &  \multicolumn{1}{l|}{Num/Class} & \multicolumn{2}{c}{n=100}    & \multicolumn{2}{c}{n=50}    &
  \multicolumn{2}{c}{n=25}    &\multicolumn{2}{c}{n=10}    \\ \cmidrule(lr){1-3} \cmidrule(lr){5-6} \cmidrule(lr){7-8} \cmidrule(lr){9-10}  \cmidrule(lr){11-12}
  &  &  & \multicolumn{1}{l|}{Metric} & \multicolumn{1}{l}{AUC} & \multicolumn{1}{l}{Acc} & \multicolumn{1}{l}{AUC} & \multicolumn{1}{l}{Acc} &
  \multicolumn{1}{l}{AUC} & \multicolumn{1}{l}{Acc} &\multicolumn{1}{l}{AUC} & \multicolumn{1}{l}{Acc} \\
 Train & Test & Model & \multicolumn{1}{l|}{Features} &  &  &  &  &  &  &  &  \\ \midrule

Basque & rest & LR & Last & 0.888 & 0.798 & 0.883 & 0.794 & 0.882 & 0.792 & 0.878 & 0.786 \\
 &  &  & Last+Spread & \textbf{0.926} & \textbf{0.843} & \textbf{0.919} & \textbf{0.836} & \textbf{0.921} & \textbf{0.835} & \textbf{0.926} & \textbf{0.828} \\ \cmidrule(l){3-12} 
 &  & RF & Last & 0.862 & 0.797 & 0.857 & 0.793 & 0.851 & 0.792 & 0.842 & 0.789 \\
 &  &  & Last+Spread & \textbf{0.924} & \textbf{0.845} & \textbf{0.920} & \textbf{0.840} & \textbf{0.918} & \textbf{0.835} & \textbf{0.914} & \textbf{0.824} \\ \midrule
Finnish & rest & LR & Last & 0.888 & 0.795 & 0.885 & 0.792 & 0.881 & 0.790 & 0.883 & 0.786 \\
 &  &  & Last+Spread & \textbf{0.910} & \textbf{0.818} & \textbf{0.908} & \textbf{0.818} & \textbf{0.909} & \textbf{0.821} & \textbf{0.910} & \textbf{0.818} \\ \cmidrule(l){3-12} 
 &  & RF & Last & 0.864 & 0.794 & 0.858 & 0.792 & 0.850 & 0.789 & 0.840 & 0.783 \\
 &  &  & Last+Spread & \textbf{0.913} & \textbf{0.821} & \textbf{0.907} & \textbf{0.821} & \textbf{0.905} & \textbf{0.818} & \textbf{0.904} & \textbf{0.814} \\ \midrule
Luganda & rest & LR & Last & 0.891 & 0.806 & 0.889 & 0.803 & 0.887 & 0.800 & 0.881 & 0.794 \\
 &  &  & Last+Spread & \textbf{0.943} & \textbf{0.864} & \textbf{0.939} & \textbf{0.854} & \textbf{0.935} & \textbf{0.847} & \textbf{0.931} & \textbf{0.837} \\ \cmidrule(l){3-12} 
 &  & RF & Last & 0.866 & 0.800 & 0.859 & 0.797 & 0.854 & 0.796 & 0.843 & 0.785 \\
 &  &  & Last+Spread & \textbf{0.936} & \textbf{0.862} & \textbf{0.930} & \textbf{0.852} & \textbf{0.926} & \textbf{0.843} & \textbf{0.921} & \textbf{0.831} \\ \midrule
Polish & rest & LR & Last & 0.900 & 0.824 & 0.897 & 0.821 & 0.891 & 0.816 & 0.887 & 0.812 \\
 &  &  & Last+Spread & \textbf{0.939} & \textbf{0.866} & \textbf{0.938} & \textbf{0.864} & \textbf{0.935} & \textbf{0.860} & \textbf{0.934} & \textbf{0.852} \\ \cmidrule(l){3-12} 
 &  & RF & Last & 0.870 & 0.793 & 0.860 & 0.787 & 0.854 & 0.783 & 0.850 & 0.780 \\
 &  &  & Last+Spread & \textbf{0.937} & \textbf{0.871} & \textbf{0.932} & \textbf{0.863} & \textbf{0.928} & \textbf{0.855} & \textbf{0.922} & \textbf{0.841} \\ \midrule
Tongan & rest & LR & Last & 0.857 & \textbf{0.815} & 0.841 & \textbf{0.811} & 0.815 & 0.800 & 0.791 & 0.771 \\
 &  &  & Last+Spread & \textbf{0.886} & 0.811 & \textbf{0.877} & 0.810 & \textbf{0.884} & \textbf{0.819} & \textbf{0.880} & \textbf{0.813} \\ \cmidrule(l){3-12} 
 &  & RF & Last & 0.765 & 0.699 & 0.766 & 0.695 & 0.769 & 0.684 & 0.785 & 0.701 \\
 &  &  & Last+Spread & \textbf{0.915} & \textbf{0.847} & \textbf{0.913} & \textbf{0.845} & \textbf{0.906} & \textbf{0.836} & \textbf{0.903} & \textbf{0.823} \\ \midrule
Xhosa & rest & LR & Last & 0.894 & 0.807 & 0.891 & 0.804 & 0.886 & 0.800 & 0.879 & 0.794 \\
 &  &  & Last+Spread & \textbf{0.944} & \textbf{0.866} & \textbf{0.940} & \textbf{0.857} & \textbf{0.933} & \textbf{0.846} & \textbf{0.931} & \textbf{0.838} \\ \cmidrule(l){3-12} 
 &  & RF & Last & 0.864 & 0.787 & 0.857 & 0.782 & 0.849 & 0.778 & 0.846 & 0.774 \\
 &  &  & Last+Spread & \textbf{0.939} & \textbf{0.868} & \textbf{0.934} & \textbf{0.860} & \textbf{0.928} & \textbf{0.852} & \textbf{0.921} & \textbf{0.835} \\ \bottomrule
\end{tabular}
}
\end{table}

We note that while OOD data detection is usually treated as a purely binary classification task by most previous work, OOD versus in distribution is a false binary. There are different levels and degrees of how OOD data can be. In the context of language, we can examine the nuances between different flavors of OOD data. While Basque is a language isolate that linguistically does not share any significant similarities to any other languages, Catalan is a Romance language with many linguistic similarities to French and Italian (and Spanish to a lesser extent). While both Basque and Catalan are considered OOD in our setting, we expect good estimates of epistemic uncertainty to capture the property that Catalan is ``less OOD'' than Basque is.  \autoref{fig:figs-re/eus_cat} shows that this desired property is captured by the norm of our randomized embedding features, while the mutual information distributions for Basque and Catalan are nearly indistinguishable. 

\begin{figure}[tb]
    \centering
    \begin{subfigure}[t]{0.5\columnwidth}
        \centering
\begin{tikzpicture}

\definecolor{color0}{rgb}{0.12156862745098,0.466666666666667,0.705882352941177}
\definecolor{color1}{rgb}{1,0.498039215686275,0.0549019607843137}

\begin{axis}[
legend cell align={left},
legend style={fill opacity=0.8, draw opacity=1, text opacity=1, draw=white!80!black, font=\tiny},
tick align=outside,
tick pos=left,
x grid style={white!69.0196078431373!black},
xlabel={Randomized Embed. Feat. Norm},
ylabel={Density},
xmin=-0.1, xmax=2.1,
xtick style={color=black},
y grid style={white!69.0196078431373!black},
ymin=0, ymax=2.541,
ytick style={color=black},
width=\columnwidth,
height=0.75\columnwidth,
]
\draw[draw=none,fill=color0,fill opacity=0.4] (axis cs:-6.93889390390723e-18,0) rectangle (axis cs:0.1,0);
\addlegendimage{ybar,ybar legend,draw=none,fill=color0,fill opacity=0.4};
\addlegendentry{Basque}

\draw[draw=none,fill=color0,fill opacity=0.4] (axis cs:0.1,0) rectangle (axis cs:0.2,0);
\draw[draw=none,fill=color0,fill opacity=0.4] (axis cs:0.2,0) rectangle (axis cs:0.3,0);
\draw[draw=none,fill=color0,fill opacity=0.4] (axis cs:0.3,0) rectangle (axis cs:0.4,0);
\draw[draw=none,fill=color0,fill opacity=0.4] (axis cs:0.4,0) rectangle (axis cs:0.5,0.04);
\draw[draw=none,fill=color0,fill opacity=0.4] (axis cs:0.5,0) rectangle (axis cs:0.6,0.34);
\draw[draw=none,fill=color0,fill opacity=0.4] (axis cs:0.6,0) rectangle (axis cs:0.7,0.44);
\draw[draw=none,fill=color0,fill opacity=0.4] (axis cs:0.7,0) rectangle (axis cs:0.8,0.54);
\draw[draw=none,fill=color0,fill opacity=0.4] (axis cs:0.8,0) rectangle (axis cs:0.9,0.72);
\draw[draw=none,fill=color0,fill opacity=0.4] (axis cs:0.9,0) rectangle (axis cs:1,0.78);
\draw[draw=none,fill=color0,fill opacity=0.4] (axis cs:1,0) rectangle (axis cs:1.1,0.899999999999999);
\draw[draw=none,fill=color0,fill opacity=0.4] (axis cs:1.1,0) rectangle (axis cs:1.2,1.1);
\draw[draw=none,fill=color0,fill opacity=0.4] (axis cs:1.2,0) rectangle (axis cs:1.3,0.780000000000001);
\draw[draw=none,fill=color0,fill opacity=0.4] (axis cs:1.3,0) rectangle (axis cs:1.4,0.919999999999999);
\draw[draw=none,fill=color0,fill opacity=0.4] (axis cs:1.4,0) rectangle (axis cs:1.5,1.04);
\draw[draw=none,fill=color0,fill opacity=0.4] (axis cs:1.5,0) rectangle (axis cs:1.6,0.619999999999999);
\draw[draw=none,fill=color0,fill opacity=0.4] (axis cs:1.6,0) rectangle (axis cs:1.7,0.919999999999999);
\draw[draw=none,fill=color0,fill opacity=0.4] (axis cs:1.7,0) rectangle (axis cs:1.8,0.560000000000001);
\draw[draw=none,fill=color0,fill opacity=0.4] (axis cs:1.8,0) rectangle (axis cs:1.9,0.24);
\draw[draw=none,fill=color0,fill opacity=0.4] (axis cs:1.9,0) rectangle (axis cs:2,0.0600000000000001);
\draw[draw=none,fill=color1,fill opacity=0.4] (axis cs:-6.93889390390723e-18,0) rectangle (axis cs:0.1,0);
\addlegendimage{ybar,ybar legend,draw=none,fill=color1,fill opacity=0.4};
\addlegendentry{Catalan}

\draw[draw=none,fill=color1,fill opacity=0.4] (axis cs:0.1,0) rectangle (axis cs:0.2,0);
\draw[draw=none,fill=color1,fill opacity=0.4] (axis cs:0.2,0) rectangle (axis cs:0.3,0);
\draw[draw=none,fill=color1,fill opacity=0.4] (axis cs:0.3,0) rectangle (axis cs:0.4,0);
\draw[draw=none,fill=color1,fill opacity=0.4] (axis cs:0.4,0) rectangle (axis cs:0.5,0.14);
\draw[draw=none,fill=color1,fill opacity=0.4] (axis cs:0.5,0) rectangle (axis cs:0.6,0.699999999999999);
\draw[draw=none,fill=color1,fill opacity=0.4] (axis cs:0.6,0) rectangle (axis cs:0.7,1.72);
\draw[draw=none,fill=color1,fill opacity=0.4] (axis cs:0.7,0) rectangle (axis cs:0.8,2.42);
\draw[draw=none,fill=color1,fill opacity=0.4] (axis cs:0.8,0) rectangle (axis cs:0.9,1.98);
\draw[draw=none,fill=color1,fill opacity=0.4] (axis cs:0.9,0) rectangle (axis cs:1,1.56);
\draw[draw=none,fill=color1,fill opacity=0.4] (axis cs:1,0) rectangle (axis cs:1.1,0.819999999999999);
\draw[draw=none,fill=color1,fill opacity=0.4] (axis cs:1.1,0) rectangle (axis cs:1.2,0.34);
\draw[draw=none,fill=color1,fill opacity=0.4] (axis cs:1.2,0) rectangle (axis cs:1.3,0.16);
\draw[draw=none,fill=color1,fill opacity=0.4] (axis cs:1.3,0) rectangle (axis cs:1.4,0.0599999999999999);
\draw[draw=none,fill=color1,fill opacity=0.4] (axis cs:1.4,0) rectangle (axis cs:1.5,0.02);
\draw[draw=none,fill=color1,fill opacity=0.4] (axis cs:1.5,0) rectangle (axis cs:1.6,0.02);
\draw[draw=none,fill=color1,fill opacity=0.4] (axis cs:1.6,0) rectangle (axis cs:1.7,0.04);
\draw[draw=none,fill=color1,fill opacity=0.4] (axis cs:1.7,0) rectangle (axis cs:1.8,0.02);
\draw[draw=none,fill=color1,fill opacity=0.4] (axis cs:1.8,0) rectangle (axis cs:1.9,0);
\draw[draw=none,fill=color1,fill opacity=0.4] (axis cs:1.9,0) rectangle (axis cs:2,0);
\end{axis}

\end{tikzpicture}
        \label{fig:figs-re/eus_cat_re.tex}
    \end{subfigure}%
    ~
    \begin{subfigure}[t]{0.5\columnwidth}
        \centering
\begin{tikzpicture}

\definecolor{color0}{rgb}{0.12156862745098,0.466666666666667,0.705882352941177}
\definecolor{color1}{rgb}{1,0.498039215686275,0.0549019607843137}

\begin{axis}[
legend cell align={left},
legend style={fill opacity=0.8, draw opacity=1, text opacity=1, draw=white!80!black, font=\tiny},
tick align=outside,
tick pos=left,
x grid style={white!69.0196078431373!black},
xlabel={Mutual Information},
ylabel={Density},
xmin=-0.0375, xmax=0.7875,
xtick style={color=black},
y grid style={white!69.0196078431373!black},
ymin=0, ymax=4.312,
ytick style={color=black},
width=\columnwidth,
height=0.75\columnwidth,
]
\draw[draw=none,fill=color0,fill opacity=0.4] (axis cs:3.46944695195361e-18,0) rectangle (axis cs:0.0375,1.81333333333333);
\addlegendimage{ybar,ybar legend,draw=none,fill=color0,fill opacity=0.4};
\addlegendentry{Basque}

\draw[draw=none,fill=color0,fill opacity=0.4] (axis cs:0.0375,0) rectangle (axis cs:0.075,1.33333333333333);
\draw[draw=none,fill=color0,fill opacity=0.4] (axis cs:0.075,0) rectangle (axis cs:0.1125,2.13333333333333);
\draw[draw=none,fill=color0,fill opacity=0.4] (axis cs:0.1125,0) rectangle (axis cs:0.15,2.56);
\draw[draw=none,fill=color0,fill opacity=0.4] (axis cs:0.15,0) rectangle (axis cs:0.1875,2.77333333333333);
\draw[draw=none,fill=color0,fill opacity=0.4] (axis cs:0.1875,0) rectangle (axis cs:0.225,3.04);
\draw[draw=none,fill=color0,fill opacity=0.4] (axis cs:0.225,0) rectangle (axis cs:0.2625,3.04);
\draw[draw=none,fill=color0,fill opacity=0.4] (axis cs:0.2625,0) rectangle (axis cs:0.3,2.18666666666667);
\draw[draw=none,fill=color0,fill opacity=0.4] (axis cs:0.3,0) rectangle (axis cs:0.3375,2.61333333333334);
\draw[draw=none,fill=color0,fill opacity=0.4] (axis cs:0.3375,0) rectangle (axis cs:0.375,1.65333333333333);
\draw[draw=none,fill=color0,fill opacity=0.4] (axis cs:0.375,0) rectangle (axis cs:0.4125,1.38666666666667);
\draw[draw=none,fill=color0,fill opacity=0.4] (axis cs:0.4125,0) rectangle (axis cs:0.45,0.906666666666667);
\draw[draw=none,fill=color0,fill opacity=0.4] (axis cs:0.45,0) rectangle (axis cs:0.4875,0.48);
\draw[draw=none,fill=color0,fill opacity=0.4] (axis cs:0.4875,0) rectangle (axis cs:0.525,0.426666666666666);
\draw[draw=none,fill=color0,fill opacity=0.4] (axis cs:0.525,0) rectangle (axis cs:0.5625,0.106666666666667);
\draw[draw=none,fill=color0,fill opacity=0.4] (axis cs:0.5625,0) rectangle (axis cs:0.6,0.16);
\draw[draw=none,fill=color0,fill opacity=0.4] (axis cs:0.6,0) rectangle (axis cs:0.6375,0);
\draw[draw=none,fill=color0,fill opacity=0.4] (axis cs:0.6375,0) rectangle (axis cs:0.675,0.0533333333333334);
\draw[draw=none,fill=color0,fill opacity=0.4] (axis cs:0.675,0) rectangle (axis cs:0.7125,0);
\draw[draw=none,fill=color0,fill opacity=0.4] (axis cs:0.7125,0) rectangle (axis cs:0.75,0);
\draw[draw=none,fill=color1,fill opacity=0.4] (axis cs:3.46944695195361e-18,0) rectangle (axis cs:0.0375,4.10666666666667);
\addlegendimage{ybar,ybar legend,draw=none,fill=color1,fill opacity=0.4};
\addlegendentry{Catalan}

\draw[draw=none,fill=color1,fill opacity=0.4] (axis cs:0.0375,0) rectangle (axis cs:0.075,1.06666666666667);
\draw[draw=none,fill=color1,fill opacity=0.4] (axis cs:0.075,0) rectangle (axis cs:0.1125,2.29333333333333);
\draw[draw=none,fill=color1,fill opacity=0.4] (axis cs:0.1125,0) rectangle (axis cs:0.15,2.61333333333333);
\draw[draw=none,fill=color1,fill opacity=0.4] (axis cs:0.15,0) rectangle (axis cs:0.1875,2.02666666666667);
\draw[draw=none,fill=color1,fill opacity=0.4] (axis cs:0.1875,0) rectangle (axis cs:0.225,2.08);
\draw[draw=none,fill=color1,fill opacity=0.4] (axis cs:0.225,0) rectangle (axis cs:0.2625,1.76);
\draw[draw=none,fill=color1,fill opacity=0.4] (axis cs:0.2625,0) rectangle (axis cs:0.3,1.97333333333333);
\draw[draw=none,fill=color1,fill opacity=0.4] (axis cs:0.3,0) rectangle (axis cs:0.3375,2.50666666666667);
\draw[draw=none,fill=color1,fill opacity=0.4] (axis cs:0.3375,0) rectangle (axis cs:0.375,1.76);
\draw[draw=none,fill=color1,fill opacity=0.4] (axis cs:0.375,0) rectangle (axis cs:0.4125,1.49333333333333);
\draw[draw=none,fill=color1,fill opacity=0.4] (axis cs:0.4125,0) rectangle (axis cs:0.45,1.06666666666667);
\draw[draw=none,fill=color1,fill opacity=0.4] (axis cs:0.45,0) rectangle (axis cs:0.4875,0.853333333333333);
\draw[draw=none,fill=color1,fill opacity=0.4] (axis cs:0.4875,0) rectangle (axis cs:0.525,0.639999999999999);
\draw[draw=none,fill=color1,fill opacity=0.4] (axis cs:0.525,0) rectangle (axis cs:0.5625,0.32);
\draw[draw=none,fill=color1,fill opacity=0.4] (axis cs:0.5625,0) rectangle (axis cs:0.6,0.0533333333333334);
\draw[draw=none,fill=color1,fill opacity=0.4] (axis cs:0.6,0) rectangle (axis cs:0.6375,0.0533333333333334);
\draw[draw=none,fill=color1,fill opacity=0.4] (axis cs:0.6375,0) rectangle (axis cs:0.675,0);
\draw[draw=none,fill=color1,fill opacity=0.4] (axis cs:0.675,0) rectangle (axis cs:0.7125,0);
\draw[draw=none,fill=color1,fill opacity=0.4] (axis cs:0.7125,0) rectangle (axis cs:0.75,0);
\end{axis}

\end{tikzpicture}
        \label{fig:figs-re/eus_cat_mi.tex}
    \end{subfigure}
    \caption{Basque and Catalan are linguistically similar but different languages. Our cosine based embeddings (left) show that they have high overlap but are more OOD than normal data. Prior work using MI (right) is unable to meaningfully distinguish any difference between the related languages. }
    \label{fig:figs-re/eus_cat}
\end{figure}
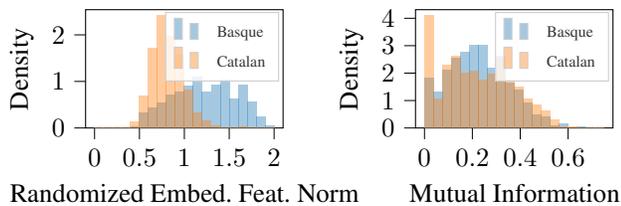

\subsection{Malware Detection}
\label{sec:malware_detection}

Finally, we evaluate the usefulness of our randomized embedding based features in the context of malware detection. Uncovering new or significantly different malware is of particular interest in the quickly evolving cyber security space. We use a dropout variant of the MalConv model \cite{Raff2017MalwareEXE}, a convolutional NN for malware detection that operates on raw byte sequences. We apply dropout before each fully connected layer of MalConv. Applying dropout to only the last layers of a NN corresponds to using maximum a posteriori (MAP) estimates for the initial layers and Bayesian estimates for the later layers \cite{Gal2016DropoutAppendix}. We train the dropout MalConv model for 5 epochs with a batch size of 32 on the EMBER2018 dataset which consists of portable executable files (PE files) scanned by VirusTotal in or before 2018 \cite{Anderson2018EMBER:Models}.

We run two experiments on the Bayesian MalConv model. First, of the 200000 files in the EMBER test set, 363 have as their top most likely malware family label (as labeled by AVClass \cite{Sebastian2016AVCLASS:Labeling}) a family that was not present in the train set. We evaluate OOD detection performance first on these unseen malware families. Second, we evaluate OOD detection performance on a different malware dataset containing malware samples obtained from a Brazilian financial entity \cite{Ceschin2019TheClassifers}. The malware from this dataset could be considered as OOD due to differing geographical specificity and intent, leading to the use of malware tactics, techniques, and procedures likely specific to a Brazilian banking target. There are also temporal differences as the Brazilian samples were all collected before the EMBER dataset, and we additionally only used malware first seen by VirusTotal before 2012. OOD task training sets consisted of $n$=100, 50, and just 25 data points from each class (in distribution and OOD). Each experiment was run 100 times with random train/test splits, where all of the data not in the training set is included in the test set. Results are summarized in \autoref{tab:malware}, showing that the inclusion of our randomized embedding based features consistently improves OOD detection across experimental settings. Because of the high class imbalance in this use case, as access to good OOD data is more limited in the malware domain, we reported the ROC AUC and the recall for the OOD class in \autoref{tab:malware}, noting that recall is often the primary metric of interest in practice for cyber security. 

\begin{table}[tb]
\caption{Performance with and without the cosine randomized embedding spread features for a MalConv model with dropout added before each fully connected layer trained to detect malware using EMBER2018. Standard devs. are reported in \autoref{sec:appendix}, and best results are \textbf{bolded}.}
\label{tab:malware}
\centering
\adjustbox{max width=0.99\columnwidth}{
\begin{tabular}{@{}lll|rrrrrr@{}}
\toprule

  \multicolumn{2}{c}{OOD} &  \multicolumn{1}{l|}{Num/Class} & \multicolumn{2}{c}{n=100}    & \multicolumn{2}{c}{n=50}    & \multicolumn{2}{c}{n=25}    \\ \cmidrule(lr){1-2} \cmidrule(lr){4-5} \cmidrule(lr){6-7} \cmidrule(lr){8-9}
  &   & \multicolumn{1}{l|}{Metric} & \multicolumn{1}{l}{AUC} & \multicolumn{1}{l}{Recall} & \multicolumn{1}{l}{AUC} & \multicolumn{1}{l}{Recall} & \multicolumn{1}{l}{AUC} & \multicolumn{1}{l}{Recall} \\
Experiment & Model & \multicolumn{1}{l|}{Features} &  &  &  &  &  &  \\

\midrule
EMBER2018 & LR & Last & 0.789 & 0.704 & \textbf{0.786} & 0.682 & \textbf{0.778} & 0.650 \\
 &  & Last+Spread & \textbf{0.793} & \textbf{0.718} & 0.783 & \textbf{0.689} & 0.766 & \textbf{0.658} \\ \cmidrule(l){2-9} 
 & RF & Last & 0.757 & 0.735 & 0.752 & 0.727 & 0.748 & 0.714 \\
 &  & Last+Spread & \textbf{0.791} & \textbf{0.784} & \textbf{0.782} & \textbf{0.764} & \textbf{0.770} & \textbf{0.743} \\ \midrule
Brazilian & LR & Last & 0.685 & \textbf{0.645} & 0.680 & 0.607 & 0.668 & 0.584 \\
 &  & Last+Spread & \textbf{0.741} & 0.620 & \textbf{0.734} & \textbf{0.617} & \textbf{0.712} & \textbf{0.605} \\ \cmidrule(l){2-9} 
 & RF & Last & 0.724 & 0.693 & 0.705 & 0.674 & 0.679 & 0.652 \\
 &  & Last+Spread & \textbf{0.839} & \textbf{0.797} & \textbf{0.813} & \textbf{0.772} & \textbf{0.776} & \textbf{0.736} \\ \bottomrule
\end{tabular}
}
\end{table}

\section{Conclusions}

We have demonstrated why previous attempts at measuring randomized embedding dispersion using Euclidean distance are inherently flawed. Then we introduced and theoretically justified a cosine distance based, lightweight approach to test time OOD data detection in the context of dropout Bayesian neural networks. Information that is already computed is used as randomized embeddings, training dataset information does not need to be stored, additional regularization methods are not needed (though do help), and auxiliary neural networks do not need to be trained to take advantage of this additional information.
While we note that our approach is limited to dropout BNNs, the popularity of the dropout approximation to BNNs and the existence of previous works exploring the use of stochastic embeddings based on dropout BNNs suggests the applicability of our approach to practice. Our approach can be deployed anywhere a dropout BNN is already deployed with minimal additional overhead. Future work includes the investigation of more elaborate features based off of the randomized embeddings. 

\appendix

\bibliography{Mendeley-Andre}

\clearpage
\onecolumn
\section{Appendix} \label{sec:appendix}
\FloatBarrier
\subsection{Experimental Result Standard Deviations} \label{sec:appendix_std}

We repeated each of our experiments multiple times and computed a mean and standard deviation for each experiment and evaluation metric. In all of our experiments, the standard deviations are much smaller than effect sizes, so we reported only the means in \autoref{sec:exp_res}. Here we report the complete results, which include standard deviations, for all of our experiments. Vision experiment results are summarized in \autoref{tab:mnist-ood-reg99_std}. Language experiment results are summarized in \autoref{tab:lang_results_std}. Malware experiment results are summarized in \autoref{tab:malware_std}.

\begin{table}[!h]
\caption{Performance with and without the cosine randomized embedding spread features for various experimental configurations for a dropout LeNet5 trained on MNIST. Features labeled as ``Last'' consist of common baseline features computed using softmax output samples from the network (predictive entropy, mutual information, and maximum softmax probability). Features labeled as ``Last+Spread'' consist of these baseline features plus our additional randomized embedding maximum cosine spread features for each layer. Each experiment was repeated multiple times, and the mean and standard deviation are reported here. Best results are shown in \textbf{bold}.}
\label{tab:mnist-ood-reg99_std}
\adjustbox{max width=\columnwidth}{
\begin{tabular}{@{}llll|rrrrrrrrrrrr@{}}
\toprule
 &  &  & Num/Class & \multicolumn{4}{l}{n=1000} & \multicolumn{4}{l}{n=100} & \multicolumn{4}{l}{n=10} \\
 &  &  & Metric & \multicolumn{2}{l}{AUC} & \multicolumn{2}{l}{Acc} & \multicolumn{2}{l}{AUC} & \multicolumn{2}{l}{Acc} & \multicolumn{2}{l}{AUC} & \multicolumn{2}{l}{Acc} \\
 &  &  & Statistic & avg & \multicolumn{1}{r|}{std} & avg & \multicolumn{1}{r|}{std} & avg & \multicolumn{1}{r|}{std} & avg & \multicolumn{1}{r|}{std} & avg & \multicolumn{1}{r|}{std} & avg & std \\
OOD Train & OOD Test & OOD Model & Features &  & \multicolumn{1}{r|}{} &  & \multicolumn{1}{r|}{} &  & \multicolumn{1}{r|}{} &  & \multicolumn{1}{r|}{} &  & \multicolumn{1}{r|}{} &  &  \\ \midrule
Fashion & Kuzushiji & LR & Last & 0.969 & \multicolumn{1}{r|}{0.000} & \textbf{0.914} & \multicolumn{1}{r|}{\textbf{0.001}} & 0.967 & \multicolumn{1}{r|}{0.004} & 0.909 & \multicolumn{1}{r|}{0.009} & 0.963 & \multicolumn{1}{r|}{0.024} & 0.884 & 0.023 \\
 &  &  & Last+Spread & \textbf{0.979} & \multicolumn{1}{r|}{\textbf{0.001}} & 0.914 & \multicolumn{1}{r|}{0.003} & \textbf{0.973} & \multicolumn{1}{r|}{\textbf{0.008}} & \textbf{0.911} & \multicolumn{1}{r|}{\textbf{0.013}} & \textbf{0.967} & \multicolumn{1}{r|}{\textbf{0.035}} & \textbf{0.901} & \textbf{0.033} \\ \cmidrule(l){3-16} 
 &  & RF & Last & 0.960 & \multicolumn{1}{r|}{0.001} & 0.917 & \multicolumn{1}{r|}{0.002} & 0.952 & \multicolumn{1}{r|}{0.005} & 0.905 & \multicolumn{1}{r|}{0.009} & 0.942 & \multicolumn{1}{r|}{0.015} & 0.884 & 0.038 \\
 &  &  & Last+Spread & \textbf{0.979} & \multicolumn{1}{r|}{\textbf{0.001}} & \textbf{0.922} & \multicolumn{1}{r|}{\textbf{0.004}} & \textbf{0.974} & \multicolumn{1}{r|}{\textbf{0.003}} & \textbf{0.921} & \multicolumn{1}{r|}{\textbf{0.005}} & \textbf{0.969} & \multicolumn{1}{r|}{\textbf{0.008}} & \textbf{0.907} & \textbf{0.018} \\ \cmidrule(l){2-16} 
 & notMNIST & LR & Last & 0.966 & \multicolumn{1}{r|}{0.001} & 0.912 & \multicolumn{1}{r|}{0.002} & 0.965 & \multicolumn{1}{r|}{0.003} & 0.909 & \multicolumn{1}{r|}{0.011} & 0.960 & \multicolumn{1}{r|}{0.004} & 0.879 & 0.024 \\
 &  &  & Last+Spread & \textbf{0.983} & \multicolumn{1}{r|}{\textbf{0.001}} & \textbf{0.932} & \multicolumn{1}{r|}{\textbf{0.003}} & \textbf{0.979} & \multicolumn{1}{r|}{\textbf{0.005}} & \textbf{0.925} & \multicolumn{1}{r|}{\textbf{0.015}} & \textbf{0.967} & \multicolumn{1}{r|}{\textbf{0.011}} & \textbf{0.892} & \textbf{0.021} \\ \cmidrule(l){3-16} 
 &  & RF & Last & 0.959 & \multicolumn{1}{r|}{0.002} & 0.920 & \multicolumn{1}{r|}{0.004} & 0.950 & \multicolumn{1}{r|}{0.005} & 0.903 & \multicolumn{1}{r|}{0.010} & 0.938 & \multicolumn{1}{r|}{0.014} & 0.880 & 0.038 \\
 &  &  & Last+Spread & \textbf{0.985} & \multicolumn{1}{r|}{\textbf{0.001}} & \textbf{0.940} & \multicolumn{1}{r|}{\textbf{0.004}} & \textbf{0.976} & \multicolumn{1}{r|}{\textbf{0.004}} & \textbf{0.924} & \multicolumn{1}{r|}{\textbf{0.006}} & \textbf{0.963} & \multicolumn{1}{r|}{\textbf{0.009}} & \textbf{0.901} & \textbf{0.020} \\ \midrule
Kuzushiji & Fashion & LR & Last & 0.973 & \multicolumn{1}{r|}{0.000} & 0.920 & \multicolumn{1}{r|}{0.001} & 0.972 & \multicolumn{1}{r|}{0.001} & 0.917 & \multicolumn{1}{r|}{0.006} & 0.967 & \multicolumn{1}{r|}{0.011} & 0.899 & 0.016 \\
 &  &  & Last+Spread & \textbf{0.989} & \multicolumn{1}{r|}{\textbf{0.001}} & \textbf{0.948} & \multicolumn{1}{r|}{\textbf{0.002}} & \textbf{0.983} & \multicolumn{1}{r|}{\textbf{0.004}} & \textbf{0.937} & \multicolumn{1}{r|}{\textbf{0.008}} & \textbf{0.978} & \multicolumn{1}{r|}{\textbf{0.004}} & \textbf{0.922} & \textbf{0.014} \\ \cmidrule(l){3-16} 
 &  & RF & Last & 0.964 & \multicolumn{1}{r|}{0.001} & 0.920 & \multicolumn{1}{r|}{0.002} & 0.956 & \multicolumn{1}{r|}{0.005} & 0.907 & \multicolumn{1}{r|}{0.009} & 0.946 & \multicolumn{1}{r|}{0.015} & 0.896 & 0.025 \\
 &  &  & Last+Spread & \textbf{0.986} & \multicolumn{1}{r|}{\textbf{0.001}} & \textbf{0.943} & \multicolumn{1}{r|}{\textbf{0.002}} & \textbf{0.978} & \multicolumn{1}{r|}{\textbf{0.004}} & \textbf{0.931} & \multicolumn{1}{r|}{\textbf{0.006}} & \textbf{0.967} & \multicolumn{1}{r|}{\textbf{0.009}} & \textbf{0.914} & \textbf{0.016} \\ \cmidrule(l){2-16} 
 & notMNIST & LR & Last & 0.967 & \multicolumn{1}{r|}{0.000} & 0.914 & \multicolumn{1}{r|}{0.001} & 0.965 & \multicolumn{1}{r|}{0.002} & 0.910 & \multicolumn{1}{r|}{0.010} & 0.960 & \multicolumn{1}{r|}{0.005} & 0.886 & 0.020 \\
 &  &  & Last+Spread & \textbf{0.984} & \multicolumn{1}{r|}{\textbf{0.001}} & \textbf{0.931} & \multicolumn{1}{r|}{\textbf{0.004}} & \textbf{0.975} & \multicolumn{1}{r|}{\textbf{0.008}} & \textbf{0.914} & \multicolumn{1}{r|}{\textbf{0.020}} & \textbf{0.966} & \multicolumn{1}{r|}{\textbf{0.007}} & \textbf{0.888} & \textbf{0.021} \\ \cmidrule(l){3-16} 
 &  & RF & Last & 0.960 & \multicolumn{1}{r|}{0.001} & 0.922 & \multicolumn{1}{r|}{0.003} & 0.950 & \multicolumn{1}{r|}{0.005} & 0.904 & \multicolumn{1}{r|}{0.010} & 0.938 & \multicolumn{1}{r|}{0.017} & 0.888 & 0.026 \\
 &  &  & Last+Spread & \textbf{0.982} & \multicolumn{1}{r|}{\textbf{0.001}} & \textbf{0.935} & \multicolumn{1}{r|}{\textbf{0.003}} & \textbf{0.971} & \multicolumn{1}{r|}{\textbf{0.006}} & \textbf{0.921} & \multicolumn{1}{r|}{\textbf{0.006}} & \textbf{0.954} & \multicolumn{1}{r|}{\textbf{0.010}} & \textbf{0.896} & \textbf{0.019} \\ \midrule
notMNIST & Fashion & LR & Last & 0.966 & \multicolumn{1}{r|}{0.003} & 0.911 & \multicolumn{1}{r|}{0.003} & 0.957 & \multicolumn{1}{r|}{0.018} & 0.906 & \multicolumn{1}{r|}{0.012} & 0.959 & \multicolumn{1}{r|}{0.037} & 0.893 & 0.023 \\
 &  &  & Last+Spread & \textbf{0.978} & \multicolumn{1}{r|}{\textbf{0.003}} & \textbf{0.937} & \multicolumn{1}{r|}{\textbf{0.005}} & \textbf{0.969} & \multicolumn{1}{r|}{\textbf{0.016}} & \textbf{0.928} & \multicolumn{1}{r|}{\textbf{0.015}} & \textbf{0.977} & \multicolumn{1}{r|}{\textbf{0.014}} & \textbf{0.925} & \textbf{0.018} \\ \cmidrule(l){3-16} 
 &  & RF & Last & 0.960 & \multicolumn{1}{r|}{0.002} & 0.910 & \multicolumn{1}{r|}{0.004} & 0.955 & \multicolumn{1}{r|}{0.005} & 0.904 & \multicolumn{1}{r|}{0.011} & 0.946 & \multicolumn{1}{r|}{0.018} & 0.887 & 0.032 \\
 &  &  & Last+Spread & \textbf{0.988} & \multicolumn{1}{r|}{\textbf{0.001}} & \textbf{0.943} & \multicolumn{1}{r|}{\textbf{0.005}} & \textbf{0.982} & \multicolumn{1}{r|}{\textbf{0.004}} & \textbf{0.935} & \multicolumn{1}{r|}{\textbf{0.007}} & \textbf{0.978} & \multicolumn{1}{r|}{\textbf{0.006}} & \textbf{0.920} & \textbf{0.017} \\ \cmidrule(l){2-16} 
 & Kuzushiji & LR & Last & 0.960 & \multicolumn{1}{r|}{0.006} & \textbf{0.900} & \multicolumn{1}{r|}{\textbf{0.007}} & 0.946 & \multicolumn{1}{r|}{0.030} & \textbf{0.893} & \multicolumn{1}{r|}{\textbf{0.021}} & 0.951 & \multicolumn{1}{r|}{0.057} & 0.882 & 0.035 \\
 &  &  & Last+Spread & \textbf{0.966} & \multicolumn{1}{r|}{\textbf{0.006}} & 0.893 & \multicolumn{1}{r|}{0.010} & \textbf{0.949} & \multicolumn{1}{r|}{\textbf{0.028}} & 0.886 & \multicolumn{1}{r|}{0.031} & \textbf{0.967} & \multicolumn{1}{r|}{\textbf{0.030}} & \textbf{0.902} & \textbf{0.035} \\ \cmidrule(l){3-16} 
 &  & RF & Last & 0.956 & \multicolumn{1}{r|}{0.002} & \textbf{0.906} & \multicolumn{1}{r|}{\textbf{0.005}} & 0.950 & \multicolumn{1}{r|}{0.006} & 0.901 & \multicolumn{1}{r|}{0.012} & 0.941 & \multicolumn{1}{r|}{0.020} & 0.883 & 0.033 \\
 &  &  & Last+Spread & \textbf{0.978} & \multicolumn{1}{r|}{\textbf{0.001}} & 0.906 & \multicolumn{1}{r|}{0.008} & \textbf{0.973} & \multicolumn{1}{r|}{\textbf{0.004}} & \textbf{0.915} & \multicolumn{1}{r|}{\textbf{0.012}} & \textbf{0.969} & \multicolumn{1}{r|}{\textbf{0.008}} & \textbf{0.906} & \textbf{0.023} \\ \bottomrule
\end{tabular}
}
\end{table}

\begin{table}[!h]
\caption{Performance with and without the cosine randomized embedding spread features for a Char-CNN with dropout added before every layer trained to classify languages using the WiLI dataset. Best results are shown in \textbf{bold}.}
\label{tab:lang_results_std}
\adjustbox{max width=\columnwidth}{
\begin{tabular}{@{}llll|rrrrrrrrrrrrrrrr@{}}
\toprule
 &  &  & Num/Class & \multicolumn{2}{l}{n=100}  &  &  & \multicolumn{1}{l}{n=50} &  &  &  & \multicolumn{1}{l}{n=25} &  &  &  & \multicolumn{1}{l}{n=10} &  &  &  \\
 &  &  & Metric & \multicolumn{2}{l}{AUC} & \multicolumn{2}{l}{Acc} & \multicolumn{2}{l}{AUC} & \multicolumn{2}{l}{Acc} & \multicolumn{2}{l}{AUC} & \multicolumn{2}{l}{Acc} & \multicolumn{2}{l}{AUC} & \multicolumn{2}{l}{Acc} \\
 &  &  & Statistic & avg & \multicolumn{1}{r|}{std} & avg & \multicolumn{1}{r|}{std} & avg & \multicolumn{1}{r|}{std} & avg & \multicolumn{1}{r|}{std} & avg & \multicolumn{1}{r|}{std} & avg & \multicolumn{1}{r|}{std} & avg & \multicolumn{1}{r|}{std} & avg & std \\
OOD Train & OOD Test & OOD Model & Features &  & \multicolumn{1}{r|}{} &  & \multicolumn{1}{r|}{} &  & \multicolumn{1}{r|}{} &  & \multicolumn{1}{r|}{} &  & \multicolumn{1}{r|}{} &  & \multicolumn{1}{r|}{} &  & \multicolumn{1}{r|}{} &  &  \\ \midrule
Basque & rest & LR & Last & 0.888 & \multicolumn{1}{r|}{0.005} & 0.798 & \multicolumn{1}{r|}{0.008} & 0.883 & \multicolumn{1}{r|}{0.014} & 0.794 & \multicolumn{1}{r|}{0.010} & 0.882 & \multicolumn{1}{r|}{0.015} & 0.792 & \multicolumn{1}{r|}{0.011} & 0.878 & \multicolumn{1}{r|}{0.029} & 0.786 & 0.019 \\
 &  &  & Last+Spread & \textbf{0.926} & \multicolumn{1}{r|}{\textbf{0.019}} & \textbf{0.843} & \multicolumn{1}{r|}{\textbf{0.019}} & \textbf{0.919} & \multicolumn{1}{r|}{\textbf{0.033}} & \textbf{0.836} & \multicolumn{1}{r|}{\textbf{0.023}} & \textbf{0.921} & \multicolumn{1}{r|}{\textbf{0.034}} & \textbf{0.835} & \multicolumn{1}{r|}{\textbf{0.025}} & \textbf{0.926} & \multicolumn{1}{r|}{\textbf{0.026}} & \textbf{0.828} & \textbf{0.023} \\ \cmidrule(l){3-20} 
 &  & RF & Last & 0.862 & \multicolumn{1}{r|}{0.008} & 0.797 & \multicolumn{1}{r|}{0.007} & 0.857 & \multicolumn{1}{r|}{0.013} & 0.793 & \multicolumn{1}{r|}{0.011} & 0.851 & \multicolumn{1}{r|}{0.015} & 0.792 & \multicolumn{1}{r|}{0.013} & 0.842 & \multicolumn{1}{r|}{0.015} & 0.789 & 0.021 \\
 &  &  & Last+Spread & \textbf{0.924} & \multicolumn{1}{r|}{\textbf{0.008}} & \textbf{0.845} & \multicolumn{1}{r|}{\textbf{0.011}} & \textbf{0.920} & \multicolumn{1}{r|}{\textbf{0.011}} & \textbf{0.840} & \multicolumn{1}{r|}{\textbf{0.013}} & \textbf{0.918} & \multicolumn{1}{r|}{\textbf{0.012}} & \textbf{0.835} & \multicolumn{1}{r|}{\textbf{0.017}} & \textbf{0.914} & \multicolumn{1}{r|}{\textbf{0.013}} & \textbf{0.824} & \textbf{0.018} \\ \midrule
Finnish & rest & LR & Last & 0.888 & \multicolumn{1}{r|}{0.003} & 0.795 & \multicolumn{1}{r|}{0.006} & 0.885 & \multicolumn{1}{r|}{0.006} & 0.792 & \multicolumn{1}{r|}{0.008} & 0.881 & \multicolumn{1}{r|}{0.013} & 0.790 & \multicolumn{1}{r|}{0.011} & 0.883 & \multicolumn{1}{r|}{0.008} & 0.786 & 0.015 \\
 &  &  & Last+Spread & \textbf{0.910} & \multicolumn{1}{r|}{\textbf{0.019}} & \textbf{0.818} & \multicolumn{1}{r|}{\textbf{0.017}} & \textbf{0.908} & \multicolumn{1}{r|}{\textbf{0.032}} & \textbf{0.818} & \multicolumn{1}{r|}{\textbf{0.022}} & \textbf{0.909} & \multicolumn{1}{r|}{\textbf{0.035}} & \textbf{0.821} & \multicolumn{1}{r|}{\textbf{0.024}} & \textbf{0.910} & \multicolumn{1}{r|}{\textbf{0.041}} & \textbf{0.818} & \textbf{0.028} \\ \cmidrule(l){3-20} 
 &  & RF & Last & 0.864 & \multicolumn{1}{r|}{0.006} & 0.794 & \multicolumn{1}{r|}{0.007} & 0.858 & \multicolumn{1}{r|}{0.009} & 0.792 & \multicolumn{1}{r|}{0.011} & 0.850 & \multicolumn{1}{r|}{0.014} & 0.789 & \multicolumn{1}{r|}{0.015} & 0.840 & \multicolumn{1}{r|}{0.020} & 0.783 & 0.024 \\
 &  &  & Last+Spread & \textbf{0.913} & \multicolumn{1}{r|}{\textbf{0.011}} & \textbf{0.821} & \multicolumn{1}{r|}{\textbf{0.012}} & \textbf{0.907} & \multicolumn{1}{r|}{\textbf{0.013}} & \textbf{0.821} & \multicolumn{1}{r|}{\textbf{0.012}} & \textbf{0.905} & \multicolumn{1}{r|}{\textbf{0.014}} & \textbf{0.818} & \multicolumn{1}{r|}{\textbf{0.014}} & \textbf{0.904} & \multicolumn{1}{r|}{\textbf{0.016}} & \textbf{0.814} & \textbf{0.015} \\ \midrule
Luganda & rest & LR & Last & 0.891 & \multicolumn{1}{r|}{0.002} & 0.806 & \multicolumn{1}{r|}{0.005} & 0.889 & \multicolumn{1}{r|}{0.004} & 0.803 & \multicolumn{1}{r|}{0.008} & 0.887 & \multicolumn{1}{r|}{0.006} & 0.800 & \multicolumn{1}{r|}{0.009} & 0.881 & \multicolumn{1}{r|}{0.026} & 0.794 & 0.015 \\
 &  &  & Last+Spread & \textbf{0.943} & \multicolumn{1}{r|}{\textbf{0.006}} & \textbf{0.864} & \multicolumn{1}{r|}{\textbf{0.016}} & \textbf{0.939} & \multicolumn{1}{r|}{\textbf{0.008}} & \textbf{0.854} & \multicolumn{1}{r|}{\textbf{0.017}} & \textbf{0.935} & \multicolumn{1}{r|}{\textbf{0.009}} & \textbf{0.847} & \multicolumn{1}{r|}{\textbf{0.014}} & \textbf{0.931} & \multicolumn{1}{r|}{\textbf{0.009}} & \textbf{0.837} & \textbf{0.017} \\ \cmidrule(l){3-20} 
 &  & RF & Last & 0.866 & \multicolumn{1}{r|}{0.006} & 0.800 & \multicolumn{1}{r|}{0.007} & 0.859 & \multicolumn{1}{r|}{0.010} & 0.797 & \multicolumn{1}{r|}{0.011} & 0.854 & \multicolumn{1}{r|}{0.014} & 0.796 & \multicolumn{1}{r|}{0.013} & 0.843 & \multicolumn{1}{r|}{0.027} & 0.785 & 0.038 \\
 &  &  & Last+Spread & \textbf{0.936} & \multicolumn{1}{r|}{\textbf{0.005}} & \textbf{0.862} & \multicolumn{1}{r|}{\textbf{0.009}} & \textbf{0.930} & \multicolumn{1}{r|}{\textbf{0.008}} & \textbf{0.852} & \multicolumn{1}{r|}{\textbf{0.014}} & \textbf{0.926} & \multicolumn{1}{r|}{\textbf{0.009}} & \textbf{0.843} & \multicolumn{1}{r|}{\textbf{0.015}} & \textbf{0.921} & \multicolumn{1}{r|}{\textbf{0.011}} & \textbf{0.831} & \textbf{0.018} \\ \midrule
Polish & rest & LR & Last & 0.900 & \multicolumn{1}{r|}{0.003} & 0.824 & \multicolumn{1}{r|}{0.003} & 0.897 & \multicolumn{1}{r|}{0.010} & 0.821 & \multicolumn{1}{r|}{0.007} & 0.891 & \multicolumn{1}{r|}{0.019} & 0.816 & \multicolumn{1}{r|}{0.012} & 0.887 & \multicolumn{1}{r|}{0.042} & 0.812 & 0.021 \\
 &  &  & Last+Spread & \textbf{0.939} & \multicolumn{1}{r|}{\textbf{0.010}} & \textbf{0.866} & \multicolumn{1}{r|}{\textbf{0.014}} & \textbf{0.938} & \multicolumn{1}{r|}{\textbf{0.014}} & \textbf{0.864} & \multicolumn{1}{r|}{\textbf{0.017}} & \textbf{0.935} & \multicolumn{1}{r|}{\textbf{0.021}} & \textbf{0.860} & \multicolumn{1}{r|}{\textbf{0.023}} & \textbf{0.934} & \multicolumn{1}{r|}{\textbf{0.020}} & \textbf{0.852} & \textbf{0.022} \\ \cmidrule(l){3-20} 
 &  & RF & Last & 0.870 & \multicolumn{1}{r|}{0.010} & 0.793 & \multicolumn{1}{r|}{0.011} & 0.860 & \multicolumn{1}{r|}{0.017} & 0.787 & \multicolumn{1}{r|}{0.015} & 0.854 & \multicolumn{1}{r|}{0.021} & 0.783 & \multicolumn{1}{r|}{0.023} & 0.850 & \multicolumn{1}{r|}{0.029} & 0.780 & 0.034 \\
 &  &  & Last+Spread & \textbf{0.937} & \multicolumn{1}{r|}{\textbf{0.005}} & \textbf{0.871} & \multicolumn{1}{r|}{\textbf{0.008}} & \textbf{0.932} & \multicolumn{1}{r|}{\textbf{0.008}} & \textbf{0.863} & \multicolumn{1}{r|}{\textbf{0.011}} & \textbf{0.928} & \multicolumn{1}{r|}{\textbf{0.009}} & \textbf{0.855} & \multicolumn{1}{r|}{\textbf{0.012}} & \textbf{0.922} & \multicolumn{1}{r|}{\textbf{0.018}} & \textbf{0.841} & \textbf{0.024} \\ \midrule
Tongan & rest & LR & Last & 0.857 & \multicolumn{1}{r|}{0.115} & \textbf{0.815} & \multicolumn{1}{r|}{\textbf{0.060}} & 0.841 & \multicolumn{1}{r|}{0.159} & \textbf{0.811} & \multicolumn{1}{r|}{\textbf{0.076}} & 0.815 & \multicolumn{1}{r|}{0.198} & 0.800 & \multicolumn{1}{r|}{0.088} & 0.791 & \multicolumn{1}{r|}{0.244} & 0.771 & 0.155 \\
 &  &  & Last+Spread & \textbf{0.886} & \multicolumn{1}{r|}{\textbf{0.060}} & 0.811 & \multicolumn{1}{r|}{0.056} & \textbf{0.877} & \multicolumn{1}{r|}{\textbf{0.091}} & 0.810 & \multicolumn{1}{r|}{0.069} & \textbf{0.884} & \multicolumn{1}{r|}{\textbf{0.101}} & \textbf{0.819} & \multicolumn{1}{r|}{\textbf{0.074}} & \textbf{0.880} & \multicolumn{1}{r|}{\textbf{0.125}} & \textbf{0.813} & \textbf{0.085} \\ \cmidrule(l){3-20} 
 &  & RF & Last & 0.765 & \multicolumn{1}{r|}{0.063} & 0.699 & \multicolumn{1}{r|}{0.055} & 0.766 & \multicolumn{1}{r|}{0.074} & 0.695 & \multicolumn{1}{r|}{0.067} & 0.769 & \multicolumn{1}{r|}{0.092} & 0.684 & \multicolumn{1}{r|}{0.075} & 0.785 & \multicolumn{1}{r|}{0.129} & 0.701 & 0.101 \\
 &  &  & Last+Spread & \textbf{0.915} & \multicolumn{1}{r|}{\textbf{0.016}} & \textbf{0.847} & \multicolumn{1}{r|}{\textbf{0.029}} & \textbf{0.913} & \multicolumn{1}{r|}{\textbf{0.019}} & \textbf{0.845} & \multicolumn{1}{r|}{\textbf{0.028}} & \textbf{0.906} & \multicolumn{1}{r|}{\textbf{0.030}} & \textbf{0.836} & \multicolumn{1}{r|}{\textbf{0.035}} & \textbf{0.903} & \multicolumn{1}{r|}{\textbf{0.051}} & \textbf{0.823} & \textbf{0.054} \\ \midrule
Xhosa & rest & LR & Last & 0.894 & \multicolumn{1}{r|}{0.004} & 0.807 & \multicolumn{1}{r|}{0.008} & 0.891 & \multicolumn{1}{r|}{0.007} & 0.804 & \multicolumn{1}{r|}{0.008} & 0.886 & \multicolumn{1}{r|}{0.019} & 0.800 & \multicolumn{1}{r|}{0.013} & 0.879 & \multicolumn{1}{r|}{0.046} & 0.794 & 0.022 \\
 &  &  & Last+Spread & \textbf{0.944} & \multicolumn{1}{r|}{\textbf{0.009}} & \textbf{0.866} & \multicolumn{1}{r|}{\textbf{0.014}} & \textbf{0.940} & \multicolumn{1}{r|}{\textbf{0.014}} & \textbf{0.857} & \multicolumn{1}{r|}{\textbf{0.019}} & \textbf{0.933} & \multicolumn{1}{r|}{\textbf{0.020}} & \textbf{0.846} & \multicolumn{1}{r|}{\textbf{0.024}} & \textbf{0.931} & \multicolumn{1}{r|}{\textbf{0.028}} & \textbf{0.838} & \textbf{0.029} \\ \cmidrule(l){3-20} 
 &  & RF & Last & 0.864 & \multicolumn{1}{r|}{0.009} & 0.787 & \multicolumn{1}{r|}{0.013} & 0.857 & \multicolumn{1}{r|}{0.016} & 0.782 & \multicolumn{1}{r|}{0.020} & 0.849 & \multicolumn{1}{r|}{0.021} & 0.778 & \multicolumn{1}{r|}{0.027} & 0.846 & \multicolumn{1}{r|}{0.028} & 0.774 & 0.032 \\
 &  &  & Last+Spread & \textbf{0.939} & \multicolumn{1}{r|}{\textbf{0.006}} & \textbf{0.868} & \multicolumn{1}{r|}{\textbf{0.011}} & \textbf{0.934} & \multicolumn{1}{r|}{\textbf{0.009}} & \textbf{0.860} & \multicolumn{1}{r|}{\textbf{0.013}} & \textbf{0.928} & \multicolumn{1}{r|}{\textbf{0.012}} & \textbf{0.852} & \multicolumn{1}{r|}{\textbf{0.017}} & \textbf{0.921} & \multicolumn{1}{r|}{\textbf{0.024}} & \textbf{0.835} & \textbf{0.021} \\ \bottomrule
\end{tabular}
}
\end{table}

\begin{table}[!h]
\caption{Performance with and without the cosine randomized embedding spread features for a MalConv model with dropout added before each fully connected layer trained to detect malware using the EMBER2018 dataset. Best results are shown in \textbf{bold}.}
\label{tab:malware_std}
\adjustbox{max width=\columnwidth}{
\begin{tabular}{@{}lll|rrrrrrrrrrrr@{}}
\toprule
 &  & Num/Class & \multicolumn{1}{l}{n=100} &  &  &  & \multicolumn{1}{l}{n=50} &  &  &  & \multicolumn{1}{l}{n=25} &  &  &  \\
 &  & Metric & \multicolumn{2}{l}{AUC} & \multicolumn{2}{l}{Recall} & \multicolumn{2}{l}{AUC} & \multicolumn{2}{l}{Recall} & \multicolumn{2}{l}{AUC} & \multicolumn{2}{l}{Recall} \\
 &  & Statistic & avg & \multicolumn{1}{r|}{std} & avg & \multicolumn{1}{r|}{std} & avg & \multicolumn{1}{r|}{std} & avg & \multicolumn{1}{r|}{std} & avg & \multicolumn{1}{r|}{std} & avg & std \\
Experiment & OOD Model & Features &  & \multicolumn{1}{r|}{} &  & \multicolumn{1}{r|}{} &  & \multicolumn{1}{r|}{} &  & \multicolumn{1}{r|}{} &  & \multicolumn{1}{r|}{} &  &  \\ \midrule
EMBER2018 & LR & Last & 0.789 & \multicolumn{1}{r|}{0.007} & 0.704 & \multicolumn{1}{r|}{0.043} & \textbf{0.786} & \multicolumn{1}{r|}{\textbf{0.008}} & 0.682 & \multicolumn{1}{r|}{0.054} & \textbf{0.778} & \multicolumn{1}{r|}{\textbf{0.018}} & 0.650 & 0.066 \\
 &  & Last+Spread & \textbf{0.793} & \multicolumn{1}{r|}{\textbf{0.008}} & \textbf{0.718} & \multicolumn{1}{r|}{\textbf{0.042}} & 0.783 & \multicolumn{1}{r|}{0.013} & \textbf{0.689} & \multicolumn{1}{r|}{\textbf{0.067}} & 0.766 & \multicolumn{1}{r|}{0.027} & \textbf{0.658} & \textbf{0.080} \\ \cmidrule(l){2-15} 
 & RF & Last & 0.757 & \multicolumn{1}{r|}{0.011} & 0.735 & \multicolumn{1}{r|}{0.046} & 0.752 & \multicolumn{1}{r|}{0.015} & 0.727 & \multicolumn{1}{r|}{0.060} & 0.748 & \multicolumn{1}{r|}{0.023} & 0.714 & 0.079 \\
 &  & Last+Spread & \textbf{0.791} & \multicolumn{1}{r|}{\textbf{0.011}} & \textbf{0.784} & \multicolumn{1}{r|}{\textbf{0.045}} & \textbf{0.782} & \multicolumn{1}{r|}{\textbf{0.014}} & \textbf{0.764} & \multicolumn{1}{r|}{\textbf{0.057}} & \textbf{0.770} & \multicolumn{1}{r|}{\textbf{0.018}} & \textbf{0.743} & \textbf{0.084} \\ \midrule
Brazilian & LR & Last & 0.685 & \multicolumn{1}{r|}{0.007} & \textbf{0.645} & \multicolumn{1}{r|}{\textbf{0.054}} & 0.680 & \multicolumn{1}{r|}{0.010} & 0.607 & \multicolumn{1}{r|}{0.072} & 0.668 & \multicolumn{1}{r|}{0.042} & 0.584 & 0.078 \\
 &  & Last+Spread & \textbf{0.741} & \multicolumn{1}{r|}{\textbf{0.023}} & 0.620 & \multicolumn{1}{r|}{0.039} & \textbf{0.734} & \multicolumn{1}{r|}{\textbf{0.023}} & \textbf{0.617} & \multicolumn{1}{r|}{\textbf{0.049}} & \textbf{0.712} & \multicolumn{1}{r|}{\textbf{0.034}} & \textbf{0.605} & \textbf{0.063} \\ \cmidrule(l){2-15} 
 & RF & Last & 0.724 & \multicolumn{1}{r|}{0.016} & 0.693 & \multicolumn{1}{r|}{0.038} & 0.705 & \multicolumn{1}{r|}{0.023} & 0.674 & \multicolumn{1}{r|}{0.055} & 0.679 & \multicolumn{1}{r|}{0.035} & 0.652 & 0.081 \\
 &  & Last+Spread & \textbf{0.839} & \multicolumn{1}{r|}{\textbf{0.010}} & \textbf{0.797} & \multicolumn{1}{r|}{\textbf{0.034}} & \textbf{0.813} & \multicolumn{1}{r|}{\textbf{0.016}} & \textbf{0.772} & \multicolumn{1}{r|}{\textbf{0.054}} & \textbf{0.776} & \multicolumn{1}{r|}{\textbf{0.024}} & \textbf{0.736} & \textbf{0.083} \\ \bottomrule
\end{tabular}
}
\end{table}
\FloatBarrier
\subsection{Additional Spectral Normalization Results} \label{sec:appendix_sn}

We repeated the vision OOD data detection experiments from \autoref{sec:mnist_ood_experiment} on a spectral normalized dropout LeNet5 trained for 100 epochs. While spectral normalization is not required to see an improvement from the inclusion of cosine spread features, spectral normalization improves OOD detection performance consistently, as shown in \autoref{tab:mnist-ood-sn99_std} when compared to \autoref{tab:mnist-ood-reg99_std}.

\begin{table}[!h]
\caption{Performance with and without the cosine randomized embedding spread features for a dropout, spectral normalized LeNet5 trained on MNIST. Best results are shown in \textbf{bold}.}
\label{tab:mnist-ood-sn99_std}
\adjustbox{max width=\columnwidth}{
\begin{tabular}{llll|rrrrrrrrrrrr}
\hline
 &  &  & Num/Class & \multicolumn{4}{l}{n=1000} & \multicolumn{4}{l}{n=100} & \multicolumn{4}{l}{n=10} \\
 &  &  & Metric & \multicolumn{2}{l}{AUC} & \multicolumn{2}{l}{Acc} & \multicolumn{2}{l}{AUC} & \multicolumn{2}{l}{Acc} & \multicolumn{2}{l}{AUC} & \multicolumn{2}{l}{Acc} \\
 &  &  & Statistic & avg & \multicolumn{1}{r|}{std} & avg & \multicolumn{1}{r|}{std} & avg & \multicolumn{1}{r|}{std} & avg & \multicolumn{1}{r|}{std} & avg & \multicolumn{1}{r|}{std} & avg & std \\
OOD Train & OOD Test & OOD Model & Features &  & \multicolumn{1}{r|}{} &  & \multicolumn{1}{r|}{} &  & \multicolumn{1}{r|}{} &  & \multicolumn{1}{r|}{} &  & \multicolumn{1}{r|}{} &  &  \\ \hline
Fashion & Kuzushiji & LR & Last & 0.982 & \multicolumn{1}{r|}{0.003} & 0.921 & \multicolumn{1}{r|}{0.005} & 0.978 & \multicolumn{1}{r|}{0.016} & 0.921 & \multicolumn{1}{r|}{0.016} & 0.980 & \multicolumn{1}{r|}{0.040} & 0.908 & 0.032 \\
 &  &  & Last+Spread & \textbf{0.984} & \multicolumn{1}{r|}{\textbf{0.002}} & \textbf{0.926} & \multicolumn{1}{r|}{\textbf{0.006}} & \textbf{0.980} & \multicolumn{1}{r|}{\textbf{0.015}} & \textbf{0.926} & \multicolumn{1}{r|}{\textbf{0.019}} & \textbf{0.984} & \multicolumn{1}{r|}{\textbf{0.008}} & \textbf{0.915} & \textbf{0.024} \\ \cline{3-16} 
 &  & RF & Last & 0.979 & \multicolumn{1}{r|}{0.001} & 0.936 & \multicolumn{1}{r|}{0.004} & 0.972 & \multicolumn{1}{r|}{0.004} & 0.931 & \multicolumn{1}{r|}{0.010} & 0.963 & \multicolumn{1}{r|}{0.014} & 0.919 & 0.024 \\
 &  &  & Last+Spread & \textbf{0.985} & \multicolumn{1}{r|}{\textbf{0.001}} & \textbf{0.939} & \multicolumn{1}{r|}{\textbf{0.004}} & \textbf{0.982} & \multicolumn{1}{r|}{\textbf{0.002}} & \textbf{0.940} & \multicolumn{1}{r|}{\textbf{0.007}} & \textbf{0.981} & \multicolumn{1}{r|}{\textbf{0.008}} & \textbf{0.932} & \textbf{0.017} \\ \cline{2-16} 
 & notMNIST & LR & Last & \textbf{0.982} & \multicolumn{1}{r|}{\textbf{0.003}} & 0.916 & \multicolumn{1}{r|}{0.008} & 0.975 & \multicolumn{1}{r|}{0.025} & 0.915 & \multicolumn{1}{r|}{0.024} & 0.979 & \multicolumn{1}{r|}{0.049} & 0.905 & 0.036 \\
 &  &  & Last+Spread & 0.982 & \multicolumn{1}{r|}{0.002} & \textbf{0.922} & \multicolumn{1}{r|}{\textbf{0.009}} & \textbf{0.975} & \multicolumn{1}{r|}{\textbf{0.025}} & \textbf{0.922} & \multicolumn{1}{r|}{\textbf{0.026}} & \textbf{0.983} & \multicolumn{1}{r|}{\textbf{0.012}} & \textbf{0.913} & \textbf{0.025} \\ \cline{3-16} 
 &  & RF & Last & 0.978 & \multicolumn{1}{r|}{0.001} & 0.933 & \multicolumn{1}{r|}{0.005} & 0.971 & \multicolumn{1}{r|}{0.005} & 0.929 & \multicolumn{1}{r|}{0.012} & 0.962 & \multicolumn{1}{r|}{0.014} & 0.917 & 0.026 \\
 &  &  & Last+Spread & \textbf{0.984} & \multicolumn{1}{r|}{\textbf{0.001}} & \textbf{0.942} & \multicolumn{1}{r|}{\textbf{0.004}} & \textbf{0.981} & \multicolumn{1}{r|}{\textbf{0.003}} & \textbf{0.940} & \multicolumn{1}{r|}{\textbf{0.008}} & \textbf{0.980} & \multicolumn{1}{r|}{\textbf{0.007}} & \textbf{0.932} & \textbf{0.018} \\ \hline
Kuzushiji & Fashion & LR & Last & \textbf{0.988} & \multicolumn{1}{r|}{\textbf{0.000}} & 0.948 & \multicolumn{1}{r|}{0.002} & \textbf{0.987} & \multicolumn{1}{r|}{\textbf{0.003}} & 0.944 & \multicolumn{1}{r|}{0.008} & 0.979 & \multicolumn{1}{r|}{0.045} & 0.918 & 0.033 \\
 &  &  & Last+Spread & 0.987 & \multicolumn{1}{r|}{0.001} & \textbf{0.949} & \multicolumn{1}{r|}{\textbf{0.002}} & 0.985 & \multicolumn{1}{r|}{0.003} & \textbf{0.944} & \multicolumn{1}{r|}{\textbf{0.007}} & \textbf{0.981} & \multicolumn{1}{r|}{\textbf{0.041}} & \textbf{0.932} & \textbf{0.037} \\ \cline{3-16} 
 &  & RF & Last & 0.980 & \multicolumn{1}{r|}{0.001} & 0.946 & \multicolumn{1}{r|}{0.002} & 0.972 & \multicolumn{1}{r|}{0.004} & 0.938 & \multicolumn{1}{r|}{0.007} & 0.966 & \multicolumn{1}{r|}{0.009} & 0.930 & 0.021 \\
 &  &  & Last+Spread & \textbf{0.985} & \multicolumn{1}{r|}{\textbf{0.001}} & \textbf{0.950} & \multicolumn{1}{r|}{\textbf{0.002}} & \textbf{0.982} & \multicolumn{1}{r|}{\textbf{0.004}} & \textbf{0.947} & \multicolumn{1}{r|}{\textbf{0.003}} & \textbf{0.983} & \multicolumn{1}{r|}{\textbf{0.002}} & \textbf{0.943} & \textbf{0.007} \\ \cline{2-16} 
 & notMNIST & LR & Last & \textbf{0.986} & \multicolumn{1}{r|}{\textbf{0.000}} & 0.937 & \multicolumn{1}{r|}{0.001} & \textbf{0.985} & \multicolumn{1}{r|}{\textbf{0.002}} & 0.934 & \multicolumn{1}{r|}{0.007} & 0.979 & \multicolumn{1}{r|}{0.029} & 0.910 & 0.025 \\
 &  &  & Last+Spread & 0.986 & \multicolumn{1}{r|}{0.001} & \textbf{0.946} & \multicolumn{1}{r|}{\textbf{0.001}} & 0.984 & \multicolumn{1}{r|}{0.003} & \textbf{0.939} & \multicolumn{1}{r|}{\textbf{0.006}} & \textbf{0.983} & \multicolumn{1}{r|}{\textbf{0.017}} & \textbf{0.922} & \textbf{0.021} \\ \cline{3-16} 
 &  & RF & Last & 0.979 & \multicolumn{1}{r|}{0.001} & 0.941 & \multicolumn{1}{r|}{0.002} & 0.972 & \multicolumn{1}{r|}{0.004} & 0.933 & \multicolumn{1}{r|}{0.008} & 0.964 & \multicolumn{1}{r|}{0.011} & 0.924 & 0.023 \\
 &  &  & Last+Spread & \textbf{0.985} & \multicolumn{1}{r|}{\textbf{0.001}} & \textbf{0.947} & \multicolumn{1}{r|}{\textbf{0.001}} & \textbf{0.982} & \multicolumn{1}{r|}{\textbf{0.002}} & \textbf{0.944} & \multicolumn{1}{r|}{\textbf{0.003}} & \textbf{0.982} & \multicolumn{1}{r|}{\textbf{0.002}} & \textbf{0.937} & \textbf{0.010} \\ \hline
notMNIST & Fashion & LR & Last & 0.988 & \multicolumn{1}{r|}{0.001} & 0.946 & \multicolumn{1}{r|}{0.002} & \textbf{0.986} & \multicolumn{1}{r|}{\textbf{0.003}} & 0.941 & \multicolumn{1}{r|}{0.009} & 0.983 & \multicolumn{1}{r|}{0.018} & 0.916 & 0.027 \\
 &  &  & Last+Spread & \textbf{0.988} & \multicolumn{1}{r|}{\textbf{0.001}} & \textbf{0.951} & \multicolumn{1}{r|}{\textbf{0.002}} & 0.985 & \multicolumn{1}{r|}{0.006} & \textbf{0.944} & \multicolumn{1}{r|}{\textbf{0.010}} & \textbf{0.986} & \multicolumn{1}{r|}{\textbf{0.002}} & \textbf{0.932} & \textbf{0.016} \\ \cline{3-16} 
 &  & RF & Last & 0.980 & \multicolumn{1}{r|}{0.001} & 0.945 & \multicolumn{1}{r|}{0.002} & 0.971 & \multicolumn{1}{r|}{0.005} & 0.938 & \multicolumn{1}{r|}{0.007} & 0.966 & \multicolumn{1}{r|}{0.010} & 0.931 & 0.017 \\
 &  &  & Last+Spread & \textbf{0.987} & \multicolumn{1}{r|}{\textbf{0.001}} & \textbf{0.952} & \multicolumn{1}{r|}{\textbf{0.001}} & \textbf{0.984} & \multicolumn{1}{r|}{\textbf{0.002}} & \textbf{0.947} & \multicolumn{1}{r|}{\textbf{0.004}} & \textbf{0.983} & \multicolumn{1}{r|}{\textbf{0.004}} & \textbf{0.941} & \textbf{0.010} \\ \cline{2-16} 
 & Kuzushiji & LR & Last & 0.986 & \multicolumn{1}{r|}{0.000} & 0.936 & \multicolumn{1}{r|}{0.002} & 0.984 & \multicolumn{1}{r|}{0.003} & 0.934 & \multicolumn{1}{r|}{0.007} & 0.984 & \multicolumn{1}{r|}{0.002} & 0.911 & 0.023 \\
 &  &  & Last+Spread & \textbf{0.989} & \multicolumn{1}{r|}{\textbf{0.000}} & \textbf{0.947} & \multicolumn{1}{r|}{\textbf{0.001}} & \textbf{0.987} & \multicolumn{1}{r|}{\textbf{0.002}} & \textbf{0.942} & \multicolumn{1}{r|}{\textbf{0.006}} & \textbf{0.987} & \multicolumn{1}{r|}{\textbf{0.003}} & \textbf{0.923} & \textbf{0.018} \\ \cline{3-16} 
 &  & RF & Last & 0.980 & \multicolumn{1}{r|}{0.001} & 0.942 & \multicolumn{1}{r|}{0.002} & 0.972 & \multicolumn{1}{r|}{0.004} & 0.934 & \multicolumn{1}{r|}{0.007} & 0.963 & \multicolumn{1}{r|}{0.011} & 0.926 & 0.018 \\
 &  &  & Last+Spread & \textbf{0.987} & \multicolumn{1}{r|}{\textbf{0.000}} & \textbf{0.947} & \multicolumn{1}{r|}{\textbf{0.001}} & \textbf{0.984} & \multicolumn{1}{r|}{\textbf{0.002}} & \textbf{0.944} & \multicolumn{1}{r|}{\textbf{0.004}} & \textbf{0.983} & \multicolumn{1}{r|}{\textbf{0.005}} & \textbf{0.936} & \textbf{0.012} \\ \hline
\end{tabular}
}
\end{table}

\FloatBarrier
\subsection{Cosine Distance vs. Euclidean Distance for Unsupervised Embeddings} \label{sec:appendix_unsup}

We also investigated cosine distance versus Euclidean distance for measuring randomized embedding dispersion in the unsupervised setting. In particular, we investigated a stacked denoising autoencoder variant \cite{Vincent2008ExtractingAutoencoders, Vincent2010StackedCriterion} where all layers are trained at the same time instead of stage-wise, and dropout with a dropout probability $p=0.1$ is used as the corrupting process at each layer of the encoder. At test time, the dropout corruption is left on to generate randomized embeddings. The denoising autoencoder was trained on MNIST for 20 epochs with a batch size of 64 using the Adam optimizer with a learning rate of 0.001, the default recommended settings, and a weight decay of 0.01. Image inputs were flattened, and the encoder architecture consisted of 6 ReLU activated linear layers of output dimensions: 784, 400, 400, 120, 120, and 84. The decoder architecture is similar to the encoder architecture but in reverse order.

\autoref{figs-re/dae-line-euc.tex} and \autoref{figs-re/dae-line-cos.tex} show consistent results. Embedding dispersion as measured by Euclidean distance is related to mean norm in an identical manner across in distribution and OOD data.
While not as well separated as in the supervised setting, in distribution data has lower embedding dispersion as measured by cosine distance when compared to OOD data. 

\begin{figure}[tb]
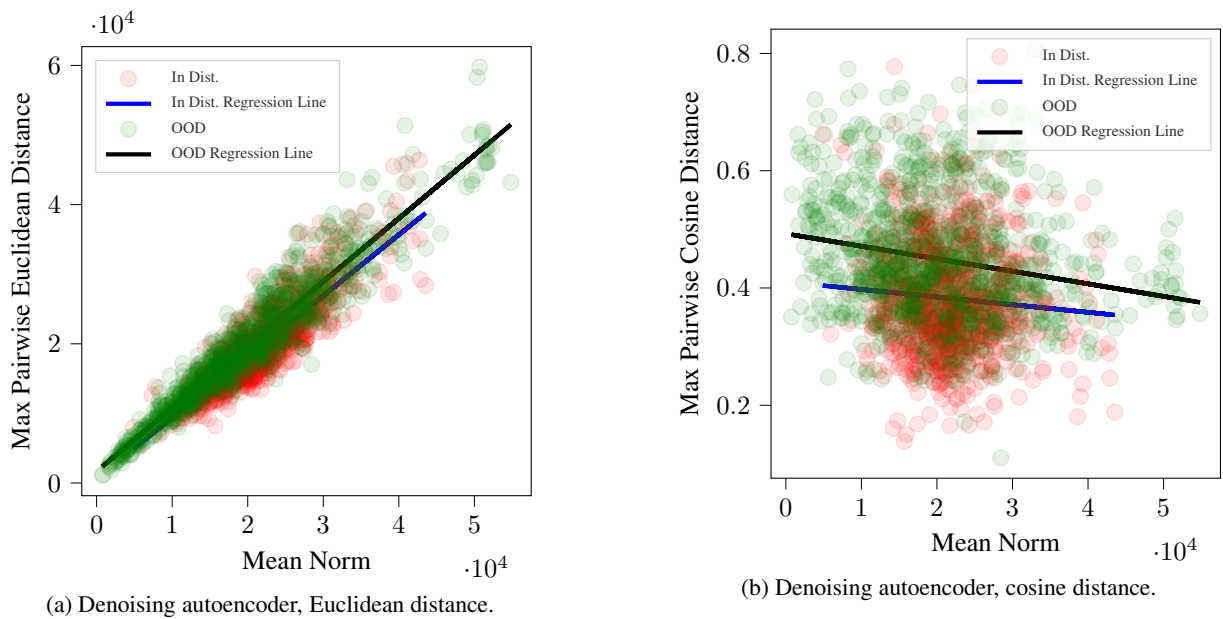

    \centering
    \begin{subfigure}[]{0.5\columnwidth}
        \centering
        \input{figs-re/dae-line-euc.tex}
        \caption{Denoising autoencoder, Euclidean distance.} \label{figs-re/dae-line-euc.tex}
    \end{subfigure}%
    ~
    \begin{subfigure}[]{0.5\columnwidth}
        \centering
        \input{figs-re/dae-line-cos.tex}
        \caption{Denoising autoencoder, cosine distance.} \label{figs-re/dae-line-cos.tex}
    \end{subfigure}%
    
    \caption{A comparison of the relationships between denoising autoencoder randomized embedding mean norm and the maximum pairwise distance for Euclidean distance and cosine distance respectively, for in distribution data (MNIST) and OOD data (Not-MNIST). Regression line fits are provided for each as well for easier comparison.}
    
\end{figure}
\FloatBarrier
\subsection{Simulations} 

\subsubsection{Mean and Variance of the Embedding Norms}
\label{sec:appendix_sim_mean_variance}
We perform a simulation to further illustrate the problem with the use of Euclidean distance in the case of a two layer ReLU activated network. 
As the depth of the BNN increases, the mean and variance of the embedding norms dramatically increase across layers, in particular as a consequence of the ReLU activation. This is known and bounds for this can be derived mathematically using the identity $max(x, 0) = 0.5(x + |x|)$ in the normal random matrix situation. However, we identify that the variance of the norms experiences a further increase due to the effect of dropout on preceding layers causing a carryover of variance into subsequent layers. Because dropout samples are taken across all layers simultaneously, the signal representing the distance between two embedding samples in layer $N$ is diluted with the inflated norm caused by preceding dropout in layers $1$ to $N-1$. This is confirmed by simulation on a two-layer neural network with dropout in \autoref{tab:appendix-mean-variance}, where the variance of the final embedding norms (4526.2) is much higher than it would be if dropout were only applied on that embedding layer (3124.0). This can explain why the Euclidean distance measure fails to perform for OOD detection.

\begin{table}[!h]
\centering
\caption{Mean and (variance) of the embedding norms in a simulated context.}
\label{tab:appendix-mean-variance}
\resizebox{0.8\textwidth}{!}{%
\begin{tabular}{@{}llll@{}}
\toprule
                       & Dropout only layer 1 & Dropout only layer 2 & Dropout both layers \\ \midrule
Layer 1 embedding norm & 96.0 (58.6)          & 118.6 (0.0)          & 96.0 (58.6)         \\
Layer 2 embedding norm & 599.7 (3328.0)       & 606.1 (3124.0)       & 501.0 (4526.2)      \\ \bottomrule
\end{tabular}%
}
\end{table}


\subsubsection{Correlation Analysis Between Measures of Uncertainty}
\label{sec:appendix_sim_corr}

To examine the relationships between the uncertainty features, we ran correlation analysis between all measures on the final embedding layer of a neural network, averaged over 1000 random matrix iterations. The embeddings form a $D \times B$ matrix, where $D$ is the embedding dimension and $B$ are the number of dropout samples, and we enforce a decaying correlation structure over the embedding dimensions. In \autoref{tab:appendix-corr-sim}, we summarize the correlations between all predictive features.

\begin{table}[!h]
\centering
\caption{Correlation analysis between measures of uncertainty in a simulated setting.}
\label{tab:appendix-corr-sim}
\resizebox{\textwidth}{!}{%
\begin{tabular}{@{}lllllll@{}}
\toprule
                 & mutual info.    & pred entr. & max softmax & max cos pdist & max euclid pdist & mean embed. norm \\ \midrule
mutual info.               & 1.00  & -0.31      & 0.23        & 0.08          & 0.32             & 0.51             \\
pred entr.       & -0.31 & 1.00       & -0.64       & 0.01          & -0.09            & -0.26            \\
max softmax      & 0.23  & -0.64      & 1.00        & 0.01          & 0.06             & 0.13             \\
max cos pdist    & 0.08  & 0.01       & 0.01        & 1.00          & 0.15             & -0.14            \\
max euclid pdist & 0.32  & -0.09      & 0.06        & 0.15          & 1.00             & 0.32             \\
mean embed. norm & 0.51  & -0.26      & 0.13        & -0.14         & 0.32             & 1.00             \\ \bottomrule
\end{tabular}%
}
\end{table}


This result indicates that the previously used features have higher inter-correlation than the max cosine pairwise distance, suggesting that our new feature adds an orthogonal measure of information that is not previously captured. This helps explain our improvement in OOD detection.

\FloatBarrier
\subsection{Additional Experiments on MNIST Variants}
\label{sec:appendix_more_mnist_experiments}

\subsubsection{Is Some OOD Training Data Needed?}
\label{sec:appendix_ood_training_data}

To compare with methods that do not require any OOD training data at all, we attempted the following where a linear kernel one class SVM and an Isolation Forest are used as outlier detectors that would hopefully capture OOD data. Results are shown in \autoref{tab:appendix-if-svm}. Generally, the best AUC is achieved using an Isolation Forest but the accuracy remains low. This is consistent with our conclusions that the relationship contains non-linear information and that some form of OOD data is needed to choose the appropriate threshold, and that as few as $n=10$ OOD points can estimate that threshold with significantly greater accuracy and AUC.  

\begin{table}[!h]
\centering
\caption{To compare with methods that do not require any OOD training data at all, we attempted the following where a linear kernel one class SVM and an Isolation Forest (IF) are used as outlier detectors.}
\label{tab:appendix-if-svm}
\resizebox{0.5\textwidth}{!}{%
\begin{tabular}{@{}lll|ll@{}}
\toprule
          &           & Metric      & AUC      & Acc      \\
OOD Test  & OOD Model & Features    &          &          \\ \midrule
Kuzushiji & SVM       & Last        & 0.555574 & 0.539760 \\
          &           & Last+Spread & 0.190757 & 0.253412 \\ \cmidrule(l){2-5} 
          & IF        & Last        & 0.876447 & 0.804457 \\
          &           & Last+Spread & 0.858214 & 0.617729 \\ \midrule
notMNIST  & SVM       & Last        & 0.532724 & 0.523312 \\
          &           & Last+Spread & 0.307177 & 0.335988 \\ \cmidrule(l){2-5} 
          & IF        & Last        & 0.842468 & 0.766183 \\
          &           & Last+Spread & 0.869251 & 0.631671 \\ \midrule
Fashion   & SVM       & Last        & 0.526127 & 0.514582 \\
          &           & Last+Spread & 0.212349 & 0.262806 \\ \cmidrule(l){2-5} 
          & IF        & Last        & 0.860657 & 0.791121 \\
          &           & Last+Spread & 0.883436 & 0.649500 \\ \bottomrule
\end{tabular}%
}
\end{table}


\subsubsection{Results when using Euclidean Randomized Embedding Maximum Spread Features}
\label{sec:appendix_euclidean_features_mnist}

To compare Euclidean distance features with cosine distance features, we ran experiments and found that cosine does empirically does better, as expected. In \autoref{tab:appendix-euc-mnist-comp} are the results for the MNIST experiments where the Spread features use Euclidean distance.

\begin{table}[!h]
\centering
\caption{MNIST variant results  when  using  Euclidean  randomized  embedding  maximum  spread  features.}
\label{tab:appendix-euc-mnist-comp}
\resizebox{\textwidth}{!}{%
\begin{tabular}{@{}llllllllll@{}}
\toprule
          &           &           & Num/Class                        & n=1000   &                               & n=100    &                               & n=10     &          \\
          &           &           & \multicolumn{1}{l|}{Metric}      & AUC      & \multicolumn{1}{l|}{Acc}      & AUC      & \multicolumn{1}{l|}{Acc}      & AUC      & Acc      \\
OOD Train & OOD Test  & OOD Model & \multicolumn{1}{l|}{Features}    &          & \multicolumn{1}{l|}{}         &          & \multicolumn{1}{l|}{}         &          &          \\ \midrule
Fashion   & Kuzushiji & LR        & \multicolumn{1}{l|}{Last}        & 0.970909 & \multicolumn{1}{l|}{0.915305} & 0.968269 & \multicolumn{1}{l|}{0.910754} & 0.967937 & 0.891701 \\
          &           &           & \multicolumn{1}{l|}{Last+Spread} & 0.959380 & \multicolumn{1}{l|}{0.906095} & 0.953365 & \multicolumn{1}{l|}{0.897221} & 0.941887 & 0.864347 \\ \cmidrule(l){3-10} 
          &           & RF        & \multicolumn{1}{l|}{Last}        & 0.961252 & \multicolumn{1}{l|}{0.916221} & 0.949761 & \multicolumn{1}{l|}{0.899854} & 0.946880 & 0.880030 \\
          &           &           & \multicolumn{1}{l|}{Last+Spread} & 0.958593 & \multicolumn{1}{l|}{0.896063} & 0.945695 & \multicolumn{1}{l|}{0.892397} & 0.943430 & 0.879735 \\ \cmidrule(l){2-10} 
          & notMNIST  & LR        & \multicolumn{1}{l|}{Last}        & 0.966394 & \multicolumn{1}{l|}{0.910832} & 0.965956 & \multicolumn{1}{l|}{0.910824} & 0.961222 & 0.882841 \\
          &           &           & \multicolumn{1}{l|}{Last+Spread} & 0.966064 & \multicolumn{1}{l|}{0.914921} & 0.955173 & \multicolumn{1}{l|}{0.901126} & 0.922676 & 0.839990 \\ \cmidrule(l){3-10} 
          &           & RF        & \multicolumn{1}{l|}{Last}        & 0.958357 & \multicolumn{1}{l|}{0.916837} & 0.947242 & \multicolumn{1}{l|}{0.898442} & 0.939856 & 0.876058 \\
          &           &           & \multicolumn{1}{l|}{Last+Spread} & 0.966978 & \multicolumn{1}{l|}{0.927221} & 0.953618 & \multicolumn{1}{l|}{0.910864} & 0.945218 & 0.882916 \\ \midrule
Kuzushiji & Fashion   & LR        & \multicolumn{1}{l|}{Last}        & 0.972502 & \multicolumn{1}{l|}{0.919816} & 0.971007 & \multicolumn{1}{l|}{0.917156} & 0.968018 & 0.900125 \\
          &           &           & \multicolumn{1}{l|}{Last+Spread} & 0.968836 & \multicolumn{1}{l|}{0.923116} & 0.962746 & \multicolumn{1}{l|}{0.916281} & 0.961164 & 0.898184 \\ \cmidrule(l){3-10} 
          &           & RF        & \multicolumn{1}{l|}{Last}        & 0.963407 & \multicolumn{1}{l|}{0.920547} & 0.953377 & \multicolumn{1}{l|}{0.903246} & 0.938179 & 0.885983 \\
          &           &           & \multicolumn{1}{l|}{Last+Spread} & 0.964453 & \multicolumn{1}{l|}{0.919921} & 0.955725 & \multicolumn{1}{l|}{0.908291} & 0.941819 & 0.894097 \\ \cmidrule(l){2-10} 
          & notMNIST  & LR        & \multicolumn{1}{l|}{Last}        & 0.967112 & \multicolumn{1}{l|}{0.914289} & 0.965904 & \multicolumn{1}{l|}{0.911111} & 0.960355 & 0.885173 \\
          &           &           & \multicolumn{1}{l|}{Last+Spread} & 0.975508 & \multicolumn{1}{l|}{0.925026} & 0.966202 & \multicolumn{1}{l|}{0.917970} & 0.938949 & 0.869555 \\ \cmidrule(l){3-10} 
          &           & RF        & \multicolumn{1}{l|}{Last}        & 0.959856 & \multicolumn{1}{l|}{0.920416} & 0.948595 & \multicolumn{1}{l|}{0.903442} & 0.928044 & 0.876868 \\
          &           &           & \multicolumn{1}{l|}{Last+Spread} & 0.968210 & \multicolumn{1}{l|}{0.925442} & 0.958487 & \multicolumn{1}{l|}{0.912819} & 0.934295 & 0.887989 \\ \midrule
notMNIST  & Fashion   & LR        & \multicolumn{1}{l|}{Last}        & 0.964710 & \multicolumn{1}{l|}{0.911000} & 0.947280 & \multicolumn{1}{l|}{0.902452} & 0.968372 & 0.894182 \\
          &           &           & \multicolumn{1}{l|}{Last+Spread} & 0.963373 & \multicolumn{1}{l|}{0.918600} & 0.949651 & \multicolumn{1}{l|}{0.909618} & 0.962364 & 0.887704 \\ \cmidrule(l){3-10} 
          &           & RF        & \multicolumn{1}{l|}{Last}        & 0.960297 & \multicolumn{1}{l|}{0.909847} & 0.955611 & \multicolumn{1}{l|}{0.903392} & 0.946035 & 0.888309 \\
          &           &           & \multicolumn{1}{l|}{Last+Spread} & 0.968320 & \multicolumn{1}{l|}{0.920963} & 0.963052 & \multicolumn{1}{l|}{0.906211} & 0.954943 & 0.906248 \\ \cmidrule(l){2-10} 
          & Kuzushiji & LR        & \multicolumn{1}{l|}{Last}        & 0.960289 & \multicolumn{1}{l|}{0.899979} & 0.938433 & \multicolumn{1}{l|}{0.888839} & 0.967569 & 0.887464 \\
          &           &           & \multicolumn{1}{l|}{Last+Spread} & 0.956063 & \multicolumn{1}{l|}{0.901247} & 0.938300 & \multicolumn{1}{l|}{0.890312} & 0.950506 & 0.865883 \\ \cmidrule(l){3-10} 
          &           & RF        & \multicolumn{1}{l|}{Last}        & 0.958770 & \multicolumn{1}{l|}{0.905458} & 0.953364 & \multicolumn{1}{l|}{0.902915} & 0.940991 & 0.884212 \\
          &           &           & \multicolumn{1}{l|}{Last+Spread} & 0.959689 & \multicolumn{1}{l|}{0.892047} & 0.957301 & \multicolumn{1}{l|}{0.895558} & 0.946487 & 0.895423 \\ \bottomrule
\end{tabular}%
}
\end{table}

\subsubsection{Classifier Feature Importances}
\label{sec:appendix_feature_importances}
To further understand the contribution of our cosine distance measure, we compute the mean and standard deviation of feature Gini importances for the random forest classifiers fit across our MNIST variant experiments. Results are shown in \autoref{fig:figs-re/feat_imp_rf.tex} and show that our spread based features are important with layer 3's spread having a Gini importance comparable to traditional features such as predictive entropy. 

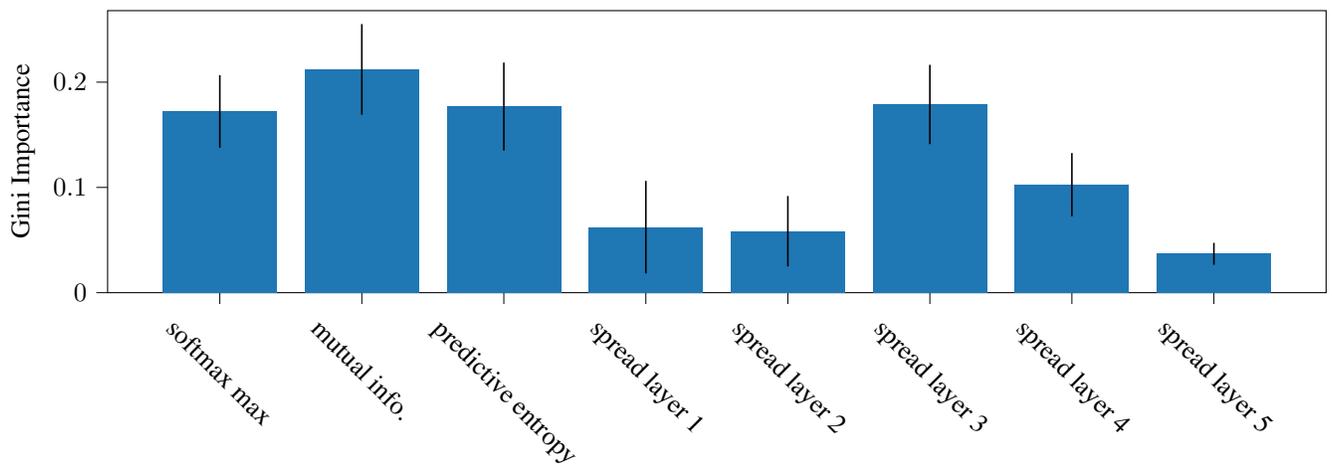
\begin{figure}[tb]
\centering
\begin{tikzpicture}

\definecolor{color0}{rgb}{0.12156862745098,0.466666666666667,0.705882352941177}

\begin{axis}[
tick align=outside,
tick pos=left,
x grid style={white!69.0196078431373!black},
xmin=-0.79, xmax=7.79,
xtick style={color=black},
xtick={0,1,2,3,4,5,6,7},
xticklabels={
  softmax max,
  mutual info.,
  predictive entropy,
  spread layer 1,
  spread layer 2,
  spread layer 3,
  spread layer 4,
  spread layer 5
},
y grid style={white!69.0196078431373!black},
ylabel={Gini Importance},
ymin=0, ymax=0.267942404912353,
ytick style={color=black},
xticklabel style={rotate=-45},
width=1\columnwidth,
height=0.3\columnwidth,
]
\draw[draw=none,fill=color0] (axis cs:-0.4,0) rectangle (axis cs:0.4,0.172114664362476);
\draw[draw=none,fill=color0] (axis cs:0.6,0) rectangle (axis cs:1.4,0.212115367815948);
\draw[draw=none,fill=color0] (axis cs:1.6,0) rectangle (axis cs:2.4,0.176790675776114);
\draw[draw=none,fill=color0] (axis cs:2.6,0) rectangle (axis cs:3.4,0.0622750019030351);
\draw[draw=none,fill=color0] (axis cs:3.6,0) rectangle (axis cs:4.4,0.058377025622042);
\draw[draw=none,fill=color0] (axis cs:4.6,0) rectangle (axis cs:5.4,0.178794588642765);
\draw[draw=none,fill=color0] (axis cs:5.6,0) rectangle (axis cs:6.4,0.102607885996188);
\draw[draw=none,fill=color0] (axis cs:6.6,0) rectangle (axis cs:7.4,0.0369247898814307);
\path [draw=black, semithick]
(axis cs:0,0.137680963198354)
--(axis cs:0,0.206548365526598);

\path [draw=black, semithick]
(axis cs:1,0.169047492858227)
--(axis cs:1,0.255183242773669);

\path [draw=black, semithick]
(axis cs:2,0.135004860871015)
--(axis cs:2,0.218576490681214);

\path [draw=black, semithick]
(axis cs:3,0.0183361876095684)
--(axis cs:3,0.106213816196502);

\path [draw=black, semithick]
(axis cs:4,0.02493046599539)
--(axis cs:4,0.091823585248694);

\path [draw=black, semithick]
(axis cs:5,0.141102111087326)
--(axis cs:5,0.216487066198204);

\path [draw=black, semithick]
(axis cs:6,0.0725684729151001)
--(axis cs:6,0.132647299077277);

\path [draw=black, semithick]
(axis cs:7,0.026520144792407)
--(axis cs:7,0.0473294349704543);

\end{axis}

\end{tikzpicture}
\caption{Random Forest Gini feature importances for MNIST variant experiments. Means and standard deviations are shown.} 
\label{fig:figs-re/feat_imp_rf.tex}
\end{figure}

\FloatBarrier
\subsection{Embedding Component Variance} 

In the context of a linear layer with input $\bf x$ indexed by $i$, output $\bf y$ indexed by $j$, weight matrix $W$, bias $\bf b$, dropout with probability $p$ of \textit{not} being dropped, the layer can be written as
\[
y_j = \left( \sum_i D_i \cdot W_{ij} \cdot x_i  \right) + b_j
\]
where $D_i \sim \textrm{Bern}(p)$ are i.i.d. Bernoulli random variables with probability parameter $p$. The variance of an embedding component can be written as follows:
\[
\textrm{Var}(y_j) = \textrm{Var} \left[ \left( \sum_i D_i \cdot W_{ij} \cdot x_i  \right) + b_j \right]
\]

Variance is invariant to changes in a location parameter, and the $D_i$ are i.i.d. allowing us to write:
\[
\textrm{Var}(y_j) =  \sum_i (W_{ij} \cdot x_i)^2 \: \textrm{Var}(D_i) =  \sum_i (W_{ij} \cdot x_i)^2 \: p \: (1-p)
\]

\FloatBarrier
\subsection{Dataset Links} 
\label{sec:appendix_datasets}

Data used in the image classification experiments can be found here:

\begin{itemize}
  \item \url{http://yann.lecun.com/exdb/mnist/}
  \item \url{https://github.com/davidflanagan/notMNIST-to-MNIST}
  \item \url{https://github.com/rois-codh/kmnist}
  \item \url{https://github.com/zalandoresearch/fashion-mnist}
\end{itemize}

Data used in the language classification experiments can be found here:
\url{https://zenodo.org/record/841984#.YK0r8S1h1pQ}

Part of the data used in the malware detection experiments can be found here:

\begin{itemize}
  \item \url{https://github.com/elastic/ember}
  \item \url{https://github.com/fabriciojoc/brazilian-malware-dataset}
\end{itemize}

The 1.1TB of raw PE files are not available as part of EMBER2018, but they can be downloaded via VirusTotal: 
\url{https://www.virustotal.com/gui/}

\end{document}